\newcommand{\E}{\mathbb{E}}
\newcommand{\Prob}{\mathbb{P}}
\newcommand{\hist}{\mathcal{H}_{k-1}}
\newcommand{\regret}{{\rm Regret}}
\newcommand{\Ac}{\mathcal{A}}
\newcommand{\Sc}{\mathcal{S}}
\newcommand{\Mc}{\mathcal{M}}
\newcommand{\Rc}{\mathcal{R}}
\newcommand{\Hc}{\mathcal{H}}
\DeclareMathOperator*{\argmin}{argmin}
\def\ind{\mathbbm{1}}
\newtheorem{thm}{Theorem}
\newtheorem{lemma}[thm]{Lemma}
\newtheorem{remark}{Remark}
\title{Worst-Case Regret Bounds for Exploration via Randomized Value Functions}
\author{%
  Daniel Russo\\
  Columbia University\\
  \texttt{djr2174@gsb.columbia.edu} \\
}
\begin{document}

\maketitle

\begin{abstract}
	This paper studies a recent proposal to use randomized value functions to drive exploration in reinforcement learning. These  randomized value functions are generated by injecting random noise into the training data, making the approach compatible with many popular methods for estimating parameterized value functions. By providing a worst-case regret bound for tabular finite-horizon Markov decision processes, we show that planning with respect to these randomized value functions can induce provably efficient exploration. 
\end{abstract}

\section{Introduction}
Exploration is one of the central challenges in reinforcement learning (RL). A large theoretical literature treats exploration in simple finite state and action MDPs, showing that it is possible to efficiently learn a near optimal policy through interaction alone \cite{kearns2002near, brafman2002r, kakade2003sample,strehl2006pac, strehl2009reinforcement, jaksch2010near, dann2015sample, azar2017minimax, dann2017unifying, jin2018q}.
 Overwhelmingly, this literature focuses on optimistic algorithms, with most algorithms explicitly maintaining uncertainty sets that are likely to contain the true MDP. 

It has been difficult to adapt these exploration algorithms to the  more complex problems investigated in the applied RL literature. 
Most applied papers seem to generate exploration through $\epsilon$--greedy or Boltzmann exploration. Those simple methods are compatible with practical value function learning algorithms, which use parametric approximations to value functions to generalize across high dimensional state spaces. Unfortunately, such exploration algorithms can fail catastrophically in simple finite state MDPs \cite[See e.g.][]{osband2017deep}. This paper is inspired by the search for principled exploration algorithms that both (1) are compatible with practical function learning algorithms and (2) provide robust performance, at least when specialized to simple benchmarks like tabular MDPs. 

Our focus will be on methods that generate exploration by planning with respect to randomized value function estimates. This idea was first proposed in a conference paper by \cite{osband2016generalization} and is investigated more thoroughly in the journal paper \cite{osband2017deep}. It is inspired by work on posterior sampling for reinforcement learning (a.k.a Thompson sampling) \cite{strens2000bayesian, osband2013more}, which could be interpreted as sampling a value function from a posterior distribution and following the optimal policy under that value function for some extended period of time before resampling. A number of papers have subsequently investigated approaches that generate  randomized value functions in complex reinforcement learning problems \cite{osband2016deep, azizzadenesheli2018efficient, fortunato2018noisy, burda2019exploration, osband2018randomized, touati2018randomized, tziortziotis2019randomised}. Our theory will focus on a specific approach of \cite{osband2016generalization, osband2017deep}, dubbed \emph{randomized least squares value iteration} (RLSVI), as specialized to tabular MDPs. The name is a play on the classic least-squares policy iteration algorithm (LSPI) of \cite{lagoudakis2003least}. RLSVI generates a randomized value function (essentially) by judiciously injecting Gaussian noise into the training data and then applying applying LSPI to this noisy dataset. One could naturally apply the same template while using other value learning algorithms in place of LSPI. 

This is a strikingly simple algorithm, but providing rigorous theoretical guarantees has proved challenging. One challenge is that, despite the appealing conceptual connections, there are significant subtleties to any precise link between RLSVI and posterior sampling. The issue is that posterior sampling based approaches are derived from a true Bayesian perspective in which one maintains beliefs over the underlying MDP. The approaches of \cite{osband2017deep, touati2018randomized, osband2018randomized, azizzadenesheli2018efficient, fortunato2018noisy,  tziortziotis2019randomised,burda2019exploration} model only the value function, so Bayes rule is not even well defined.\footnote{The precise issue is that, even given a prior over value functions, there is no likelihood function. Given and MDP, there is a well specified likelihood of transitioning from state $s$ to another $s'$, but a value function does not specify a probabilistic data-generating model. } The work of \cite{osband2016generalization, osband2017deep} uses stochastic dominance arguments to relate the value function sampling distribution 
of RLSVI to a correct posterior in a Bayesian model where the true MDP is randomly drawn. This gives substantial insight, but the resulting analysis is not entirely satisfying as a robustness guarantee. It bounds regret on average over MDPs with transitions kernels drawn from a particular Dirichilet prior, but one may worry that hard reinforcement learning instances are extremely unlikely under this particular prior.  

This paper develops a very different proof strategy and provides a worst-case regret bound for RLSVI applied to tabular finite-horizon MDPs. The crucial proof steps are to show that each randomized value function sampled by RLSVI has a significant probability of being optimistic (see Lemma \ref{lem: optimism}) and then to show that from this property one can reduce regret analysis to concentration arguments pioneered by \cite{jaksch2010near} (see Lemmas \ref{lem: reduction to prediction}, \ref{lem: final regret decomposition}). This approach is inspired by frequentist analysis of Thompson sampling for linear bandits \cite{agrawal2013thompson} and especially the lucid description of \cite{abeille2017linear}. However, applying these ideas in reinforcement learning appears to require novel analysis. The only prior extension of these proof techniques to tabular reinforcement learning was carried out by \cite{agrawal2017optimistic}. Reflecting the difficulty of such analyses, that paper does not provide regret bounds for a pure Thompson sampling algorithm; instead their algorithm samples many times from the posterior to form an optimistic model, as in the BOSS algorithm \cite{asmuth2009bayesian}.
Also, unfortunately there is a significant error that paper's analysis and the correction has not yet been posted online, making a careful comparison difficult at this time. 

 The established regret bounds are not state of the art for tabular finite-horizon MDPs. A final step of the proof applies techniques of \cite{jaksch2010near}, introducing an extra $\sqrt{S}$ in the bounds. I hope some smart reader can improve this by intelligently adapting the techniques of \cite{azar2017minimax, dann2017unifying}. However, the primary goal of the paper is not to give the tightest possible regret bound, but to broaden the set of exploration approaches known to satisfy polynomial worst-case regret bounds. To this author, it is both fascinating and beautiful that carefully adding noise to the training data generates sophisticated exploration and proving this formally is worthwhile.


\section{Problem formulation}

We consider the problem of learning to optimize performance through repeated interactions with an unknown finite horizon MDP $M=(H,\Sc, \Ac, P, \Rc, s_1)$. The agent interacts with the environment across $K$ episodes. Each episode proceeds over $H$ periods, where for period $h\in \{1,\ldots, H\}$ of episode $k$ the agent is in state $s_h^k\in \Sc=\{1,\ldots, S\}$, takes action $a^k_h\in \Ac=\{1,\ldots, A \}$, observes the reward $r^{k}_h\in [0,1]$ and, for $h<H$, also observes next state $s^{k}_{h+1}\in \Sc$. 
Let $\hist=\{(s^{i}_h, a^{i}_h, r^{i}_h) : h=1,\ldots H, i=1,\ldots, k-1\}$ denote the history of interactions prior to episode $k$. The Markov transition kernel $P$ encodes the transition probabilities, with 
\[
P_{h,s^k_h,a^k_h}(s) =\Prob(s^k_{h+1} =s \mid  a^k_{h}, s^{k}_{h}, \ldots, a^k_{1}, s^{k}_{1}, \hist  ).
\]
The reward distribution is encoded in $\Rc$, with 
\[
\Rc_{h,s^k_h,a^k_h}(dr)=\Prob\left(r^k_h=dr \mid  a^k_{h}, s^{k}_{h}, \ldots, a^k_{1}, s^{k}_{1}, \hist\right).
\]
We usually instead refer to expected rewards encoded in a vector $R$ that satisfies $R_{h,s,a}=\E[r^k_{h,s,a} \mid s^k_h=s, a^k_h=a]$. We then refer to an MDP $(H,\Sc, \Ac, P, R, s_1)$, described in terms of its expected rewards rather than its reward distribution, as this is sufficient to determine the expected value accrued by any policy. The variable $s_1$ denotes a deterministic initial state, and we assume $s^{k}_1=s_1$ for every episode $k$. At the expense of complicating some formulas, the entire paper could also be written assuming initial states are drawn from some distribution over $\Sc$,which is more standard in the literature.  

A deterministic Markov policy $\pi=(\pi_1,\ldots, \pi_H )$ is a sequence of functions,  where each $\pi_h: \Sc \to \Ac$ prescribes an action to play in each state. We let $\Pi$ denote the space of all such policies. We use $V^{\pi}_h\in \mathbb{R}^{S}$ to denote the value function associated with policy $\pi$ in the sub-episode consisting of periods $\{h, \ldots, H\}$. To simplify many expressions, we set $V_{H+1}^{\pi} = 0\in \mathbb{R}^S$. Then the value functions for $h\leq H$ are the unique solution to the the Bellman equations 
\[
V^{\pi}_h(s)= R_{h,s,\pi(s)} + \sum_{s \in \Sc}  P_{s,h,\pi(s)}(s') V_{h+1}^{\pi}(s') \qquad s\in \Sc,\, h=1,\ldots, H.
\]
The optimal value function is $V^{*}_{h}(s)= \max_{\pi \in \Pi} V^{\pi}_h(s)$. 

An episodic reinforcement learning algorithm \texttt{Alg} is a possibly randomized procedure that associates each history with a policy to employ throughout the next episode. Formally, a randomized algorithm can depend on random seeds $\{\xi_k\}_{k\in \mathbb{N}}$ drawn independently of the past from some prespecified distribution. Such an episodic reinforcement learning algorithm selects a policy $\pi_{k}=\mathtt{Alg}(\hist, \xi_k)$ to be employed throughout episode $k$. 

The cumulative expected regret incurred by \texttt{Alg} over $K$ episodes of interaction with the MDP $M$ is 
\[
{\rm Regret}(M, K, \mathtt{Alg}) = \E_{\mathtt{Alg}}\left[ \sum_{k=1}^{K} V_{1}^{*}(s^k_1) - V_{1}^{\pi_k}(s^k_1)  \right]  
\]
where the expectation is taken over the random seeds used by a randomized algorithm and the randomness in the observed rewards and state transitions that influence the algorithm's chosen policy. This expression captures the algorithm's cumulative expected shortfall in performance relative to an omniscient benchmark, which knows and always employs the true optimal policy. 

Of course, regret as formulated above depends on the MDP $M$ to which the algorithm is applied. Our goal is not to minimize regret under a particular MDP but to provide a guarantee that holds uniformly across a class of MDPs. This can be expressed more formally by considering a class $\Mc$ containing all MDPs with $S$ states, $A$ actions, $H$ periods, and rewards distributions bounded in $[0, 1]$. Our goal is to bound the worst-case regret
$
\sup_{M\in \Mc} {\rm Regret}(M, K, \mathtt{Alg})
$
incurred by an algorithm throughout $K$ episodes of interaction with an unknown MDP in this class. We aim for a bound on worst-case regret that scales sublinearly in $K$ and has some reasonable polynomial dependence in the size of state space, action space, and horizon. We won't explicitly maximize over $M$ in the analysis. Instead, we fix an arbitrary MDP $M$ and seek to bound regret in a way that does not depend on the particular transition probabilities or reward distributions under $M$. 

It is worth remarking that, as formulated, our algorithm knows $S,A$, and $H$ but does not have knowledge of the number of episodes $K$. Indeed, we study a so-called \emph{anytime} algorithm that has good performance for all sufficiently long sequences of interaction. 

\paragraph{Notation for empirical estimates.}  We define $n_{k}(h,s,a) = \sum_{\ell=1}^{k-1}\ind\{(s^\ell_h,a^\ell_h)=(s,a)\}$ to be the number of times action $a$ has been sampled in state $s$, period $h$. For every tuple $(h,s,a)$ with $n_{k}(h,s,a)>0$, we define the empirical mean reward and empirical transition probabilities up to period $h$ by 
\begin{align}\label{eq: empirical reward}
\hat{R}^{k}_{h,s,a}&=\frac{1}{n_{k}(h,s,a)}\sum_{\ell=1}^{k-1}\ind\{(s^\ell_h,a^\ell_h)=(s,a) \} r^\ell_h \\\label{eq: empirical transitions} \hat{P}^{k}_{h,s,a}(s')&=\frac{1}{n_{k}(h,s,a)}\sum_{\ell=1}^{k-1}\ind\{(s^\ell_h,a^\ell_h, s^\ell_{h+1})=(s,a, s') \} \quad \forall s'\in \Sc.
\end{align}
If $(h,s,a)$ was never sampled before episode $k$, we define $\hat{R}^{k}_{h,s,a}=0$ and ${P}^{k}_{h,s,a}=0\in \mathbb{R}^S$.

\section{Randomized Least Squares Value Iteration}
This section describes an algorithm called Randomized Least Squares Value Iteration (RLSVI).  We describe RLSVI as specialized to a simple tabular problem \emph{in a way that is most convenient for the subsequent theoretical analysis}. A mathematically equivalent definition -- which defines RSLVI as estimating a value function on randomized training data -- extends more gracefully . This interpretation is given at the end of the section and more carefully in \cite{osband2017deep}. 

At the start of episode $k$, the agent has observed a history of interactions $\hist$. Based on this, it is natural to consider an estimated MDP $\hat{M}^{k} = (H, \Sc, \Ac, \hat{P}^k, \hat{R}^k, s_1)$ with empirical estimates of mean rewards and transition probabilities. These are precisely defined in Equation \eqref{eq: empirical transitions} and the surrounding text. We could use backward recursion to solve for the optimal policy and value functions under the empirical MDP, but applying this policy would not generate exploration.

RLSVI builds on this idea, but to induce exploration  it judiciously adds Gaussian noise before solving for an optimal policy.  We can define RLSVI concisely as follows. In episode $k$ it samples a random vector with independent components $w^{k} \in \mathbb{R}^{HSA}$, where $w^{k}(h,s,a) \sim N\left( 0, \sigma_{k}^2(h,s,a)  \right)$. We define $\sigma_{k}(h,s,a)= \sqrt{\frac{\beta_{k}}{n_k(h,s,a)+ 1}}$, where $\beta_k$ is a tuning parameter and the denominator shrinks like the standard deviation of the average of $n_{k}(h,s,a)$ i.i.d samples. Given $w^k$, we construct a randomized perturbation of the empirical MDP $\overline{M}^k = (H, \Sc, \Ac, \hat{P}^k, \hat{R}^k + w^k, s_1)$ by adding the Gaussian noise to estimated rewards. RLSVI solves for the optimal policy $\pi^k$ under this MDP and applies it throughout the episode. This policy is, of course, greedy with respect to the (randomized) value functions under $\overline{M}^k$. The random noise $w^k$ in RLSVI should be large enough to dominate the error introduced by performing a noisy Bellman update using $\hat{P}^k$ and $\hat{R}^k$. We set $\beta_k =\tilde{O}(H^3)$ in the analysis, where functions of $H$ offer a coarse bound on quantities like the variance of an empirically estimated Bellman update. For $\beta=\{\beta_{k}\}_{k\in \mathbb{N}}$, we denote this algorithm by $\mathtt{RLSVI}_{\beta}$.

\paragraph{RLSVI as regression on perturbed data.} To extend beyond simple tabular problems, it is fruitful to view RLSVI--like in Algorithm \ref{alg: RLSVI}--as an algorithm that performs recursive least squares estimation on the state-action value function.  Randomization is injected into these value function estimates by perturbing observed rewards and by regularizing to a randomized prior sample. This prior sample is essential, as otherwise there would no randomness in the estimated value function in initial periods. This procedure is the LSPI algorithm of \cite{lagoudakis2003least} applied with noisy data and a tabular representation. The paper \cite{osband2017deep} includes many experiments with non-tabular representations. 
\hrulefill
\begin{algorithm}[H]
	\SetNlSty{texttt}{(}{)}
	\SetAlgoLined
	\SetKwInOut{Input}{input}\SetKwInOut{Output}{output}
	\Input{$H$, $S$, $A$, tuning parameters $\{\beta_{k}\}_{k\in \mathbb{N}}$}
	\BlankLine
	\For{episodes $k=1,2,\ldots$ }{
		\tcc{Define squared temporal difference error}
		$\mathcal{L}(Q \mid Q_{\rm next}, \mathcal{D})= \sum_{(s,a,r,s')\in \mathcal{D}} \left( Q(s,a) - r- \max_{a'\in \Ac} Q_{\rm next}(s',a')\right)^2 $ \;   
		$\mathcal{D}_{h}=\{(s_{h}^{\ell}, a_{h}^{\ell}, r_{h}^{\ell}, s_{h+1}^{\ell} ) : \ell <k \} \qquad h<H$ 		\tcc*[r]{Past data}   
		$\mathcal{D}_{H}=\{(s_{H}^{\ell}, a_{H}^{\ell}, r_{H}^{\ell}, \emptyset ) : \ell <k \}$\;
		\BlankLine
		\tcc{Randomly perturb data}
		\For{time periods $h=1, \ldots, H$}{
			Sample array $\tilde{Q}_h \sim N(0, \beta_k I)$ \tcc*[r]{Draw prior sample}
			$\tilde{D}_{h} \leftarrow \{\}$\;
			\For{$(s,a,r,s')\in \mathcal{D}_h$}{
				sample $w\sim N(0,\beta_k)$\;
				$\tilde{\mathcal{D}}_h \leftarrow \tilde{\mathcal{D}}_h \cup \{(s,a,r+w,s')\}$\;
			}
		}
		\BlankLine	 
		\tcc{Estimate $Q$ on noisy data}
		Define terminal value $Q^{k}_{H+1}(s,a)\leftarrow 0 \quad \forall s,a$ \;
		\For{time periods $h = H,\ldots , 1$}{
			$\hat{Q}_{h} \leftarrow \argmin_{Q\in \mathbb{R}^{SA}}  \mathcal{L}(Q \mid Q_{h+1}, \tilde{\mathcal{D}}_{h}) + \|Q-\tilde{Q}_h \|_2^2$  \;
		}
		Apply greedy policy with respect to $ (\hat{Q}_{1}, \ldots \hat{Q}_H)$ throughout episode\;
		Observe data $s_{1}^k,a^k_1,r^{k}_1,\ldots s^k_{H}, a^k_H, r^k_H$ \;
	}
	\caption{RLSVI for Tabular, Finite Horizon, MDPs\label{alg: RLSVI}}
\end{algorithm}\DecMargin{1em}
To understand this presentation of RLSVI, it is helpful to understand an equivalence between posterior sampling in a Bayesian linear model and fitting a regularized least squares estimate to randomly perturbed data. We refer to \cite{osband2017deep} for a full discussion of this equivalence and review the scalar case here. 
Consider Bayes updating of a scalar parameter $\theta\sim N(0,\beta)$ based on noisy observations $Y=(y_{1},\ldots, y_{n})$ where $y_i\mid \theta \sim N(0, \beta)$. The posterior distribution has the closed form $\theta \mid Y \sim N\left( \frac{1}{n+1} \sum_{1}^{n} y_i  \,,\, \frac{\beta}{n+1} \right)$. We could generate a sample from this distribution by fitting a least squares estimate to noise. Sample $W=(w_1,\ldots, w_n)$ where each $w_i \sim N(0,\beta)$ is drawn independently and sample $\tilde{\theta} \sim N(0,\beta)$. Then 
\begin{equation}\label{eq: posterior sample}
\hat{\theta} = \argmin_{\theta \in \mathbb{R}}  \sum_{i=1}^{n} \left( \theta - y_i \right)^2 + (\theta - \tilde{\theta})^2 =  \frac{1}{n+1}\left( \sum_{i=1}^{n} y_i +\tilde{\theta} \right)
\end{equation} 
satisfies $\hat{\theta}\mid Y \sim N\left( \frac{1}{n+1} \sum_{1}^{n} y_i  \,,\, \frac{\beta}{n+1} \right)$.  For more complex models, where exact posterior sampling is impossible, we may still hope estimation on randomly perturbed data generates samples that reflect uncertainty in a sensible way.  As far as RLSVI is concerned, roughly the same calculation shows that in Algorithm \ref{alg: RLSVI} $\hat{Q}_{h}(s,a)$ is equal to an empirical Bellman update plus Gaussian noise: 
\[
\hat{Q}_{h}(s,a) \mid \hat{Q}_{h+1} \sim   N\left( \hat{R}_{h,s,a} + \sum_{s'\in \Sc} \hat{P}_{h,s,a}(s')  \max_{a'\in \Ac} \hat{Q}_{h+1}(s', a') \, , \, \frac{\beta_k}{n_{k}(h,s,a)+1} \right). 
\]


\section{Main result}
Theorem \ref{thm: main} establishes that RLSVI satisfies a worst-case polynomial regret bound for tabular finite-horizon MDPs. It is worth contrasting RLSVI to $\epsilon$--greedy exploration and Boltzmann exploration, which are both widely used randomization approaches to exploration. Those simple methods explore by directly injecting randomness to the action chosen at each timestep. Unfortunately, they can fail catastrophically even on simple examples with a finite state space -- requiring a time to learn that scales exponentially in the size of the state space. Instead, RLSVI generates randomization by training value functions with randomly perturbed rewards. Theorem \ref{thm: main} confirms that this approach generates a sophisticated form of exploration fundamentally different from $\epsilon$--greedy exploration and Boltzmann exploration. The notation $\tilde{O}$ ignores poly-logarithmic factors in $H,S,A$ and $K$. 
\begin{thm}\label{thm: main}
	Let $\Mc$ denote the set of MDPs with horizon $H$, $S$ states, $A$ actions, and rewards bounded in [0,1]. Then for a tuning parameter sequence $\beta=\{\beta_{k}\}_{k\in \mathbb{N}}$ with $\beta_k = \frac{1}{2}SH^3 \log(2HSAk)$, 
	\[
	\sup_{M \in \Mc} {\rm Regret}(M, K, \mathtt{RLSVI}_{\beta}) \leq \tilde{O}\left( H^{3}S^{3/2}\sqrt{AK} \right).
	\] 
\end{thm}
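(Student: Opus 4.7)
My plan is to adapt the frequentist template for analyzing Thompson sampling (as in Agrawal--Goyal and Abeille--Lazaric) to the MDP setting. The two conceptual pillars are: (i) a \emph{stochastic optimism} lemma asserting that at each episode $k$, the sampled value $\hat{V}^k_1(s_1)$ under the perturbed MDP $\overline{M}^k$ exceeds $V^*_1(s_1)$ with probability at least some constant $p>0$ (conditional on $\hist$); and (ii) a \emph{reduction to prediction error} showing that the per-episode regret $V^*_1(s_1) - V^{\pi_k}_1(s_1)$ is, in expectation, at most $(1/p)$ times the ``on-policy'' prediction error $\E[\hat{V}^k_1(s_1) - V^{\pi_k}_1(s_1)\mid \hist]$. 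The prediction error can then be handled with empirical Bernstein / Weissman-style concentration and the Jaksch et al.\ pigeonhole machinery, giving a worst-case, frequentist bound.

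\textbf{Good event and anti-concentration.} First I would define a high-probability ``good event'' $\mathcal{E}$ under which, for every visited $(h,s,a)$, the empirical estimates satisfy $|\hat{R}^k_{h,s,a} - R_{h,s,a}|$ and $\|\hat{P}^k_{h,s,a} - P_{h,s,a}\|_1$ are $\tilde{O}(\sqrt{S/(n_k+1)})$, and simultaneously all Gaussian perturbations $w^k(h,s,a)$ are bounded by $\tilde{O}(\sqrt{\beta_k/(n_k+1)})$. A union bound over the $HSAK$ tuples, together with the choice $\beta_k = \tfrac{1}{2}SH^3\log(2HSAk)$, makes these inflation terms of comparable order. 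The key optimism lemma is then proved by noting that because $w^k$ has enough variance --- specifically, the standard deviation $\sqrt{\beta_k/(n_k+1)}$ is chosen to dominate, in a Bellman-error sense, the bias introduced by plugging in $\hat{P}^k,\hat{R}^k$ --- the Gaussian lower-tail probability that $\hat{V}^k_1(s_1) \geq V^*_1(s_1)$ is bounded away from zero. The subtlety here, and what I expect to be the main obstacle, is that $\hat V^k$ is \emph{not} linear in $w^k$ (the $\max$ over actions in the Bellman backup is applied recursively). I would handle this by comparing $\hat V^k_1$ to the value, under the same perturbed model, of the optimal policy $\pi^*$ for the true MDP, which is a linear functional of $w^k$ for a fixed $\pi^*$; standard Gaussian anti-concentration then yields a constant $p$.

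\textbf{From optimism to prediction error.} On the event that $\hat{V}^k_1(s_1) \ge V^*_1(s_1)$, the regret in episode $k$ is upper bounded by the prediction gap $\hat{V}^k_1(s_1) - V^{\pi_k}_1(s_1)$, because $\pi_k$ is greedy for $\hat V^k$. Following Abeille--Lazaric, I would define two independent samples of the RLSVI perturbations conditional on $\hist$ and use the constant lower bound $p$ on the optimism probability to conclude $\E[V^*_1(s_1)-V^{\pi_k}_1(s_1)\mid \hist]\le (1/p)\,\E[\hat V^k_1(s_1)-V^{\pi_k}_1(s_1)\mid \hist] + (\text{negligible})$, where the second term accounts for the rare event $\mathcal{E}^c$. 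This is precisely the step advertised as Lemma \ref{lem: reduction to prediction}.

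\textbf{Bounding the prediction error and finishing.} With $\pi_k$ fixed, $\hat V^k_1(s_1) - V^{\pi_k}_1(s_1)$ telescopes along the Markov chain induced by $\pi_k$ on $(H, \Sc, \Ac, \hat P^k, \hat R^k + w^k, s_1)$, producing a sum over periods $h=1,\ldots,H$ of Bellman residuals of the form $(\hat R^k_{h,s,a} - R_{h,s,a}) + w^k(h,s,a) + (\hat P^k_{h,s,a} - P_{h,s,a})\tr V^{\pi_k}_{h+1}$, evaluated along the realized trajectory. On $\mathcal{E}$ each term is $\tilde O\bigl(H\sqrt{\beta_k/(n_k(h,s,a)+1)}\bigr) = \tilde O\bigl(H^{2}\sqrt{SH/(n_k+1)}\bigr)$. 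Summing across episodes and invoking the classical pigeonhole identity $\sum_{k=1}^K 1/\sqrt{n_k(h,s^k_h,a^k_h)+1}=\tilde O(\sqrt{HSAK})$ from Jaksch et al.\ then yields the bound $\tilde O(H^3 S^{3/2}\sqrt{AK})$. The final step --- absorbing $\mathcal{E}^c$ contributions and the $1/p$ factor --- only affects constants and the polylog, completing the proof. The principal difficulty, as noted, is establishing the constant optimism probability in the face of the recursive nonlinearity of $\hat V^k$ in $w^k$; once that is in hand, the remainder is a careful but essentially standard concentration and pigeonhole computation.
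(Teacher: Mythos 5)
Your proposal follows essentially the same route as the paper: the constant-probability optimism lemma proved by linearizing through the fixed policy $\pi^*$ (whose value under the perturbed model is Gaussian in $w^k$), the Abeille--Lazaric-style reduction from constant-probability optimism to on-policy prediction error via an independent copy of the perturbation, and the Jaksch-style concentration/pigeonhole endgame are exactly the paper's Lemmas \ref{lem: optimism}--\ref{lem: final regret decomposition} and \ref{lem: perrors}--\ref{lem: werrors}. The one detail your sketch glosses over is \emph{why} $\beta_k$ carries the factor $SH^3$: ``comparable order'' termwise is not sufficient --- the paper bounds the scalar Bellman error against the fixed vector $V^*_{h+1}$ by $\sqrt{e_k}=H\sqrt{\log(\cdot)/(n_k+1)}$ and then needs noise variance $HS\,e_k$ (not merely $e_k$) because a Cauchy--Schwarz over the $HS$ period--state pairs of the occupancy measure costs a $\sqrt{HS}$ factor when aggregating the bias against the aggregated Gaussian.
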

This bound  is not  state of the art and that is not the main goal of this paper. I conjecture that the extra factor of $S$ can be removed from this bound through a careful analysis, making the dependence on $S$, $A$, and $K$, optimal. This conjecture is supported by numerical experiments and (informally) by a Bayesian regret analysis \cite{osband2017deep}. One extra $\sqrt{S}$ appears to come from a step at the very end of the proof in Lemma \ref{lem: final regret decomposition}, where we bound a certain $L_1$ norm as in the analysis style of \cite{jaksch2010near}. For optimistic algorithms, some recent work has avoided directly bounding that $L_1$-norm, yielding a tighter regret guarantee    \cite{azar2017minimax, dann2017unifying}.  Another factor of $\sqrt{S}$ stems from the choice of $\beta_k$, which is used in the proof of Lemma \ref{lem: optimism temporary}. This seems similar to and extra $\sqrt{d}$ factor that appears in worst-case regret upper bounds for Thompson sampling in $d$-dimensional linear bandit problems \cite{abeille2017linear}.    
\begin{remark}
	Some translation is required to relate the dependence on $H$ with other literature. Many results are given in terms of the number of periods $T=KH$, which masks a factor of $H$. Also unlike e.g. \cite{azar2017minimax}, this paper treats time-inhomogenous transition kernels. In some sense agents must learn about $H$ extra state/action pairs. Roughly speaking then, our result exactly corresponds to what one would get by applying the UCRL2 analysis \cite{jaksch2010near} to a time-inhomogenous finite-horizon problem. 
\end{remark}

\section{Proof of Theorem \ref{thm: main}}
The proof follows from several lemmas. Some are (possibly complex) technical adaptations of ideas present in many regret analyses. Lemmas \ref{lem: optimism} and \ref{lem: reduction to prediction} are the main discoveries that prompted this paper. Throughout we use the following notation: for any MDP $\tilde{M}=(H, \Sc, \Ac, \tilde{P}, \tilde{R}, s_1)$, let $V(\tilde{M}, \pi)\in \mathbb{R}$ denote the value function corresponding to policy $\pi$ from the initial state $s_1$. In this notation, for the true MDP $M$ we have $V(M, \pi ) = V^{\pi}_{1}(s_1)$.  
\paragraph{A concentration inequality.}
Through a careful application of Hoeffding's inequality, one can give a high probability bound on the error in applying a Bellman update to the (non-random) optimal value function $V^*_{h+1}$. Through this, and a union bound, Lemma bounds \ref{lem: concentration} bounds the expected number of times the empirically estimated MDP falls outside the confidence set 
\begin{align*}
\Mc^k = \bigg\{ (H, \Sc, \Ac, P', R', s_1) :    \quad   \forall  (h,s,a)   | (R'_{h,s,a}-R_{h,s,a}) + \langle P'_{h,s,a} - P_{s,a,h} \, , \, V^*_{h+1}  \rangle |\,\,\,\, & \\
\leq \sqrt{e^{k}(h,s,a)} & \bigg\}
\end{align*}
where we define 
\[
\sqrt{e_{k}(h,s,a)} = 
  H\sqrt{ \frac{ \log\left( 2HSA k  \right) }{n_k(s,h,a)+1}}. 
\]
This set is a only a tool in the analysis and cannot be used by the agent  since $V^*_{h+1}$ is unknown. 
\begin{restatable}[Validity of confidence sets]{lemma}{concentration}\label{lem: concentration}
	$\sum_{k=1}^{\infty} \Prob\left( \hat{M}^k  \notin \mathcal{M}^k \right) \leq \frac{\pi^2}{6}.$ 
\end{restatable}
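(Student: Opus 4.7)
The plan is to apply Hoeffding's inequality tuple-by-tuple to bound the deviation of the empirical Bellman backup against the \emph{fixed} optimal value function $V^*_{h+1}$, and then to take union bounds over $(h,s,a)$ and over episodes $k$ in a way compatible with summability at rate $1/k^2$.

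I would fix an episode $k$ and a triple $(h,s,a)$ and enumerate the earlier visits to $(h,s,a)$ in chronological order as $\tau_1<\tau_2<\cdots<\tau_{n_k(h,s,a)}$. The crucial observation is that the scalars
\[
Z_i := r^{\tau_i}_h + V^*_{h+1}(s^{\tau_i}_{h+1}), \qquad i\geq 1,
\]
are i.i.d.\ with mean $\mu := R_{h,s,a} + \langle P_{h,s,a}, V^*_{h+1}\rangle$ and range at most $H$ (since $r\in[0,1]$ and $V^*_{h+1}\in[0,H-1]$). I.i.d.-ness, as opposed to a mere martingale-difference property, is bought precisely by centering against the deterministic, data-independent function $V^*_{h+1}$, and it lets me invoke standard Hoeffding conditionally on $n_k(h,s,a)=n\geq 1$. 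Observing that the quantity bounded in the definition of $\Mc^k$ equals $\frac{1}{n}\sum_{i=1}^n Z_i - \mu$, Hoeffding gives
\[
\Prob\!\left(\left|\tfrac{1}{n}\sum_{i=1}^n Z_i - \mu\right|>t \;\Big|\; n_k(h,s,a)=n\right)\leq 2\exp\!\left(-\tfrac{2nt^2}{H^2}\right),
\]
which with $t^2=e_k(h,s,a)=H^2\log(2HSAk)/(n+1)$ becomes polynomially small in $2HSAk$. The case $n_k(h,s,a)=0$ is trivial: the empirical estimates are both zero by convention while $\mu\in[0,H]$ and $\sqrt{e_k(h,s,a)}=H\sqrt{\log(2HSAk)}\geq H$ once the log exceeds one, so the confidence bound holds deterministically.

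The lemma follows by combining three union bounds: over the possible values of the random count $n_k(h,s,a)\in\{1,\ldots,k-1\}$ (at the cost of a $\log k$ factor, already budgeted for by the $\log(2HSAk)$ inside $e_k$), over the $HSA$ triples $(h,s,a)$, and finally over episodes using $\sum_{k\geq 1}1/k^2=\pi^2/6$. The main obstacle is controlling deviations uniformly over the random visit count $n_k(h,s,a)$; the coarsest way is the deterministic union bound over $n$ just described, while a more careful dyadic peeling argument over geometric blocks of $n$ would trim the log at the expense of a messier proof. For this lemma the coarse route is already enough, since the log cost is absorbed into the calibration of $e_k$ and does not affect the stated $\pi^2/6$ bound.
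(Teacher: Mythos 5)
Your centering idea is the same as the paper's and is the heart of the argument: because $V^*_{h+1}$ is a fixed, data-independent function, the per-visit variables $Z_i = r + V^*_{h+1}(s')$ at a tuple $(h,s,a)$ are i.i.d.\ in $[0,H]$ with mean $R_{h,s,a}+\langle P_{h,s,a}, V^*_{h+1}\rangle$, Hoeffding applies, and the $n_k(h,s,a)=0$ case is handled trivially by the convention $\hat{R}^k=0$, $\hat{P}^k=0$. One caveat of rigor: ``invoke standard Hoeffding conditionally on $n_k(h,s,a)=n$'' is not legitimate as stated, since $n_k(h,s,a)$ is an adaptive count and conditioning on its value distorts the law of the visits. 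The standard repair, which the paper makes explicit via a ``stack of rewards'' construction (pre-draw i.i.d.\ sequences indexed by the visit number $n$ and let the $n$th visit consume the $n$th element), is to prove the deviation bound for every deterministic $n$ and union bound over $n$; you gesture at exactly this, so I read it as your intended argument.

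The genuine gap is in the union-bound accounting. You propose to establish $\Prob(\hat{M}^k\notin\Mc^k)\lesssim 1/k^2$ for each episode and then sum over $k$. But with $t^2=e_k(h,s,a)=H^2\log(2HSAk)/(n+1)$, Hoeffding gives a tail of $2\exp(-2nt^2/H^2)=2(2HSAk)^{-2n/(n+1)}$, which for $n=1$ is $(HSAk)^{-1}$ and never better than $2(2HSAk)^{-2}$. Union bounding this over $n\in\{1,\dots,k-1\}$ and over the $HSA$ tuples yields a per-episode failure probability that is at best of order $1/(HSAk)$ and at worst order one; summing over $k$ then diverges, so you never reach $\pi^2/6$ (or any finite constant) by this route. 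The paper's accounting is structurally different: it performs a \emph{single} union bound over the $HSA$ tuples and over the visit index $n$ --- not over episodes --- choosing the per-$(z,n)$ failure probability $\delta_n=1/(HSAn^2)$ so that the total is $\sum_{n\geq 1}1/n^2=\pi^2/6$. On the complement of this one bad event, $\hat{M}^k\in\Mc^k$ holds for \emph{all} $k$ simultaneously, because the empirical Bellman error at episode $k$ equals the deviation of the first $n_k(h,s,a)$ stack variables and $n_k(h,s,a)<k$ guarantees $\sqrt{e_k(h,s,a)}$ dominates the Hoeffding radius at level $\delta_{n_k(h,s,a)}$. In short, the $\pi^2/6$ must come from summing over visit counts $n$, not over episodes $k$; your proof closes once you reorganize the union bound accordingly.
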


\paragraph{From value function error to on policy Bellman error.}
For some fixed policy $\pi$, the next simple lemma expresses the gap between the value functions under two MDPs in terms of the differences between their Bellman operators. Results like this are critical to many analyses in the RL literature. Notice the asymmetric role of $\tilde{M}$ and $\overline{M}$. The value functions correspond to one MDP while the state trajectory is sampled in the other. We'll apply the lemma twice: once where $\tilde{M}$ is the true MDP and $\overline{M}$ is estimated one used by RLSVI and once where the role is reversed. 
\begin{lemma}\label{lem: value gap to bellman} 
	Consider any policy $\pi$ and two MDPs $\tilde{M}=(H, \Sc, \Ac, \tilde{P}, \tilde{R}, s_1)$  and $\overline{M}=(H, \Sc, \Ac, \overline{P}, \overline{R}, s_1)$. 
 	Let $\tilde{V}^{\pi}_h$ and $\overline{V}^{\pi}_h$ denote the respective value functions of $\pi$ under $\tilde{M}$ and $\overline{M}$. Then 
	\[
	\overline{V}_1^{\pi}(s_1)-\tilde{V}_1^{\pi}(s_1) = \E_{\pi, \overline{M}}\left[ \sum_{h=1}^{H} \left(\overline{R}_{h, s_h, \pi(s_h)} - \tilde{R}_{h, s_h, \pi(s_h)}  \right) + \langle  \overline{P}_{h,s_h, \pi(s_h)}-\tilde{P}_{h,s_h, \pi(s_h)} \, , \, \tilde{V}^{\pi}_{h+1}  \rangle   \right],
	\]
	where $\tilde{V}^{\pi}_{H+1}\equiv 0\in \mathbb{R}^S$ and the expectation is over the sampled state trajectory $s_{1},\ldots s_H$ drawn from following $\pi$ in the MDP $\overline{M}$. 
\end{lemma}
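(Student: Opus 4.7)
This is the standard ``simulation'' or ``value-difference'' identity, and I would prove it by a one-step Bellman decomposition combined with backward induction on $h$. Define the per-state value gap
\[
\delta_h(s) := \overline{V}_h^{\pi}(s) - \tilde{V}_h^{\pi}(s), \qquad s\in\Sc,\ h\in\{1,\ldots,H+1\},
\]
with the convention $\delta_{H+1}\equiv 0$ inherited from $\overline{V}_{H+1}^{\pi}=\tilde{V}_{H+1}^{\pi}=0$. The target identity is then $\delta_1(s_1) = \E_{\pi,\overline{M}}\bigl[\sum_{h=1}^H B_h(s_h)\bigr]$, where $B_h(s) := (\overline{R}_{h,s,\pi(s)} - \tilde{R}_{h,s,\pi(s)}) + \langle \overline{P}_{h,s,\pi(s)} - \tilde{P}_{h,s,\pi(s)}, \tilde{V}^{\pi}_{h+1}\rangle$.

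The key step is a single add/subtract trick. For $h\le H$, subtracting the Bellman equations for $\overline{V}_h^{\pi}$ and $\tilde{V}_h^{\pi}$ and inserting $\pm\langle\overline{P}_{h,s,\pi(s)},\tilde{V}_{h+1}^{\pi}\rangle$ gives
\[
\delta_h(s) = B_h(s) + \langle \overline{P}_{h,s,\pi(s)},\, \delta_{h+1}\rangle.
\]
The choice of which $P$ to group with which $V$ is important: we keep $\overline{P}$ attached to the unknown future $\delta_{h+1}$ (so that recursion generates expectations under the $\overline{M}$-trajectory, as the statement requires) and we leave $\tilde{V}^{\pi}_{h+1}$ paired with the $\overline{P}-\tilde{P}$ discrepancy inside $B_h$. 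Rewriting the last term as $\E_{s'\sim \overline{P}_{h,s,\pi(s)}}[\delta_{h+1}(s')]$ exposes it as the expectation of the next-period gap under one step of $\overline{M}$ with action $\pi(s)$.

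I would then finish by backward induction on $h$: assume $\delta_{h+1}(s')=\E_{\pi,\overline{M}}[\sum_{j=h+1}^H B_j(s_j)\mid s_{h+1}=s']$ and plug into the display above, using the tower property and the Markov property of $\pi$ in $\overline{M}$ to absorb the one-step expectation into the trajectory expectation. The base case is $\delta_{H+1}\equiv 0$. Evaluating the resulting identity at $h=1$, $s=s_1$ yields the claim.

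There is no real obstacle — the lemma is essentially a telescoping identity. The only thing to be careful about is the asymmetric bookkeeping described above, i.e.\ that the rolled-out expectation is taken under $\overline{M}$ while the residual Bellman errors are evaluated against $\tilde{V}_{h+1}^{\pi}$; this is entirely dictated by which term is kept inside the one-step decomposition.
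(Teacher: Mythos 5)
Your proposal is correct and matches the paper's argument: the paper performs exactly the same add/subtract decomposition, pairing $\overline{P}-\tilde{P}$ with $\tilde{V}^{\pi}_{h+1}$ and keeping $\overline{P}$ attached to the future value gap, then unrolls the resulting one-step recursion (your backward induction is just a more formal phrasing of "expanding the recursion"). No gaps.
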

\begin{proof}
	\begin{align*}
	&\overline{V}_1^{\pi}(s_1)-\tilde{V}_1^{\pi}(s_1)\\
	 =& \overline{R}_{1, s_1, \pi(s_1)} +\langle  \overline{P}_{1,s_1, \pi(s_1)} \, , \, \overline{V}^{\pi}_{2}  \rangle   -\tilde{R}_{1, s_1, \pi(s_1)}   -\langle  \tilde{P}_{1,s_1, \pi(s_1)} \, , \, \tilde{V}^{\pi}_{2}  \rangle \\
	=& \overline{R}_{1, s_1, \pi(s_1)} -R_{1, s_1, \pi(s_1)} +\langle  \overline{P}_{1,s_1, \pi(s_1)}-\tilde{P}_{1,s_1, \pi(s_1)} \, , \, \tilde{V}^{\pi}_{2}  \rangle +\langle  \overline{P}_{1,s_1, \pi(s_1)} \, , \,\overline{V}^{\pi}_{2} - \tilde{V}^{\pi}_{2}  \rangle  \\
	=& \overline{R}_{1, s_1, \pi(s_1)} -\tilde{R}_{1, s_1, \pi(s_1)} +\langle  \overline{P}_{1,s_1, \pi(s_1)}-\tilde{P}_{1,s_1, \pi(s_1)} \, , \, \tilde{V}^{\pi}_{2}  \rangle  + \E_{\pi, \overline{M}}\left[\overline{V}_2^{\pi}(s_2)-\tilde{V}_2^{\pi}(s_2) \right].
	\end{align*}
	Expanding this recursion gives the result. 
\end{proof}

\paragraph{Sufficient optimism through randomization.}\label{subsec: optimism}
There is always the risk that, based on noisy observations, an RL algorithm incorrectly forms a low estimate of the value function at some state. This may lead the algorithm to purposefully avoid that state, therefore failing to gather the data needed to correct its faulty estimate. To avoid such scenarios, nearly all provably efficient RL exploration algorithms are based build purposefully optimistic estimates. RLSVI does not do this, and instead generates a randomized value function. The following lemma is key to our analysis. It shows that, except in the rare event when it has grossly mis-estimated the underlying MDP, RLSVI has at least a constant chance of sampling an optimistic value function. Similar results can be proved for Thompson sampling with linear models \cite{abeille2017linear}. Recall $M$ is unknown true MDP with optimal $\pi^*$ and $\overline{M}^k$ is RLSVI's noise perturbed MDP under which $\pi^k$ is an optimal policy.

\begin{lemma}\label{lem: optimism} Let $\pi^*$ be an optimal policy for the true MDP $M$. If $\hat{M}^k \in \Mc^k$, then
	$
	\Prob\left( V(\overline{M}^k, \pi^{k}) \geq  V(M, \pi^*)\mid \hist \right) \geq \Phi(-1).
	$
\end{lemma}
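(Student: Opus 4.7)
The plan is to reduce the statement using optimality of $\pi^k$: since $\pi^k$ is optimal for $\overline{M}^k$, we have $V(\overline{M}^k, \pi^k) \geq V(\overline{M}^k, \pi^*)$ almost surely, so it suffices to show
\[
\Prob\big(V(\overline{M}^k, \pi^*) \geq V(M, \pi^*) \,\big|\, \hist\big) \geq \Phi(-1).
\]
This is cleaner because the two MDPs are now evaluated at the same policy, allowing a direct use of Lemma \ref{lem: value gap to bellman}.

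Next I would apply that lemma with $\tilde M = M$ and $\overline M = \overline M^k$ under $\pi = \pi^*$, noting $V^{\pi^*}_{h+1} = V^*_{h+1}$. The key structural observation is that the transitions of $\overline M^k$ are the empirical kernel $\hat P^k$ -- the noise $w^k$ perturbs only rewards -- so the state-visitation weights $\mu_k(h,s) := \Prob_{\pi^*, \hat P^k}(s_h = s)$ are $\hist$-measurable. The lemma then yields $V(\overline M^k, \pi^*) - V(M, \pi^*) = Z_k + E_k$, where
\[
Z_k \;=\; \sum_{h,s} \mu_k(h,s)\, w^k\big(h, s, \pi^*(s)\big)
\]
is, conditional on $\hist$, a centered Gaussian with variance $\sigma_{Z,k}^2 = \beta_k \sum_{h,s} \mu_k(h,s)^2/(n_k(h,s,\pi^*(s))+1)$, and
\[
E_k \;=\; \sum_{h,s} \mu_k(h,s)\Big[\big(\hat R^k - R\big)_{h,s,\pi^*(s)} + \big\langle \hat P^k_{h,s,\pi^*(s)} - P_{h,s,\pi^*(s)},\, V^*_{h+1}\big\rangle\Big]
\]
is $\hist$-measurable.

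On the event $\hat M^k \in \Mc^k$, each bracketed Bellman-error term is bounded by $\sqrt{e_k(h,s,\pi^*(s))} = H\sqrt{\log(2HSAk)/(n_k(h,s,\pi^*(s))+1)}$. Applying Cauchy-Schwarz against the counting measure on the $HS$ state-time pairs gives
\[
|E_k|^2 \;\leq\; HS \cdot H^2 \log(2HSAk) \cdot \sum_{h,s} \frac{\mu_k(h,s)^2}{n_k(h,s,\pi^*(s))+1} \;=\; \frac{H^3 S \log(2HSAk)}{\beta_k}\, \sigma_{Z,k}^2,
\]
so the stated choice of $\beta_k$ is calibrated exactly to ensure $|E_k| \leq \sigma_{Z,k}$.

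Finally, Gaussian anti-concentration finishes the argument: conditional on $\hist$,
\[
\Prob\big(Z_k + E_k \geq 0 \,\big|\, \hist\big) \;=\; \Phi\!\left(E_k/\sigma_{Z,k}\right) \;\geq\; \Phi\!\left(-|E_k|/\sigma_{Z,k}\right) \;\geq\; \Phi(-1).
\]
The main obstacle is the Cauchy-Schwarz matchup between $|E_k|$ (an $L^1$-type quantity weighted by visitation probabilities) and $\sigma_{Z,k}$ (an $L^2$-type standard deviation). The loose pairing against the counting measure on $HS$ indices is what forces $\beta_k$ to scale with $S$, and as the paper flags, this is precisely the source of the extra $\sqrt{S}$ in the final regret bound.
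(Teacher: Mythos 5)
Your proposal is correct and follows essentially the same route as the paper: the paper packages the argument into an auxiliary result (Lemma \ref{lem: optimism temporary}) for a fixed policy and deterministic perturbation bound, but the substance is identical --- reduce to $\pi^*$ via optimality of $\pi^k$ under $\overline{M}^k$, expand the value gap with Lemma \ref{lem: value gap to bellman} into a history-measurable bias plus a Gaussian linear form in $w^k$ (using that the noise perturbs only rewards, so the visitation weights are fixed), Cauchy--Schwarz the bias against the standard deviation, and finish with Gaussian anti-concentration. The one quibble is calibration: with $\beta_k=\tfrac12 SH^3\log(2HSAk)$ your bound actually gives $|E_k|\leq \sqrt{2}\,\sigma_{Z,k}$ and hence $\Phi(-\sqrt{2})$ rather than $\Phi(-1)$, but this factor-of-two slip is inherited from the paper itself (whose proof of Lemma \ref{lem: optimism} asserts $\sigma_k^2(h,s,a)=HSe_k(h,s,a)$ when the stated $\beta_k$ yields half that), and it affects only the absolute constant.
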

This result is more easily established through the following lemma, which avoids the need to carefully condition on the history $\hist$ at each step. We conclude with the proof of Lemma \ref{lem: optimism} after.  
\begin{lemma}\label{lem: optimism temporary}
	Fix any policy $\pi=(\pi_1,\ldots, \pi_H)$ and vector $e\in \mathbb{R}^{HSA}$ with $e(h,s,a)\geq 0$. Consider the MDP $M=(H, \Sc, \Ac, P, R, s_1)$ and alternative $\bar{R}$ and $\bar{P}$ obeying the inequality  
	\[
	-\sqrt{e(h,s,a)} \leq \bar{R}_{h,s,a} - R_{h,s, a} + \langle \bar{P}_{h,s, a} - P_{h,s, a} , V_{h+1}\rangle  \leq \sqrt{e(h,s,a)}
	\]
	for every $s\in \Sc, a\in \Ac$ and $h\in \{1,\ldots, H\}$. Take $W\in \mathbb{R}^{HSA}$ to be a random vector with independent components where $w(h,s,a) \sim N(0, HSe(h,s,a))$. Let $\bar{V}^{\pi}_{1, W}$ denote the (random) value function of the policy $\pi$ under the MDP $\bar{M}=(H, \Sc, \Ac, \bar{P}, \bar{R}+W)$. Then 
	\[
	\Prob\left(\bar{V}^{\pi}_{1, W}(s_1) \geq  V^{\pi}_{1}(s_1)  \right) \geq \Phi(-1).
	\]
\end{lemma}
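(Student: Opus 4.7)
The plan is to apply Lemma~\ref{lem: value gap to bellman} with $\tilde M = M$ and $\overline M = \bar M$ (the latter having perturbed rewards $\bar R + W$), so that the gap $\bar V^\pi_{1,W}(s_1) - V^\pi_1(s_1)$ unrolls into a single affine function of the Gaussian vector $W$. This reduces the claim to a one-dimensional Gaussian tail calculation.

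First I would observe that the occupancy measure $q_h(s,a) = \Prob_{\pi,\bar M}(s_h = s,\, a_h = a)$ is a deterministic vector, because the transitions of $\bar M$ depend only on $\bar P$ and not on $W$. Applying Lemma~\ref{lem: value gap to bellman} and using independence of $W$ and the trajectory under $\bar M$ then gives the decomposition
\[
\bar V^\pi_{1,W}(s_1) - V^\pi_1(s_1) \;=\; \mu \,+\, Z,
\]
where the deterministic mean $\mu = \sum_{h,s,a} q_h(s,a)\bigl[\bar R_{h,s,a} - R_{h,s,a} + \langle \bar P_{h,s,a} - P_{h,s,a},\, V^\pi_{h+1}\rangle\bigr]$ captures the Bellman mismatch and $Z = \sum_{h,s,a} q_h(s,a)\, w(h,s,a)$ is zero-mean Gaussian with variance $\sigma^2 = HS\sum_{h,s,a} q_h(s,a)^2\, e(h,s,a)$. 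Because $\pi$ is deterministic, only the diagonal terms with $a = \pi_h(s)$ contribute in each sum.

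The target probability $\Phi(-1)$ follows immediately once I show $\mu \geq -\sigma$, since $\Prob(\mu + Z \geq 0) = \Phi(\mu/\sigma)$. The one-sided hypothesis of the lemma gives $\mu \geq -\sum_{h,s} q_h(s,\pi_h(s))\sqrt{e(h,s,\pi_h(s))}$, and Cauchy--Schwarz applied to the pairs $\bigl(q_h(s,\pi_h(s))\sqrt{e(h,s,\pi_h(s))},\, 1\bigr)$ indexed over the $HS$ pairs $(h,s)$ yields
\[
\left(\sum_{h,s} q_h(s,\pi_h(s))\sqrt{e(h,s,\pi_h(s))}\right)^{\!2} \;\leq\; HS \sum_{h,s} q_h(s,\pi_h(s))^2\, e(h,s,\pi_h(s)) \;=\; \sigma^2,
\]
so $\mu \geq -\sigma$ and the conclusion follows.

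The main obstacle is calibrating the noise magnitude: the inflation factor $HS$ in the variance of each $w(h,s,a)$ is chosen precisely so that it absorbs the cardinality of the index set $(h,s)$ that Cauchy--Schwarz produces. This step is the source of the extra $\sqrt{S}$ (and one of the $\sqrt{H}$ factors) in the final regret bound, and it is the calculation most sensitive to how the per-coordinate noise scale is parameterized. Everything else is essentially bookkeeping: independence of $W$ from the trajectory under $\bar M$, linearity of the value function in the reward vector, and the standard fact that an affine combination of independent Gaussians is itself Gaussian with the obvious variance.
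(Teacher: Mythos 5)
Your proposal is correct and follows essentially the same route as the paper: apply Lemma~\ref{lem: value gap to bellman} to write the value gap as an occupancy-weighted sum of Bellman mismatches plus an independent Gaussian term, then use Cauchy--Schwarz over the $HS$ pairs $(h,s)$ to show the mean is at least $-\sigma$, so the probability is at least $\Phi(-1)$. Your phrasing via $\mu + Z$ with $\mu \geq -\sigma$ is a slightly cleaner packaging of the paper's $X(W) \sim N(-\sigma, \sigma^2)$ standardization, but the decomposition, the key lemma, and the role of the $HS$ inflation factor are identical.
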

\begin{proof}
	To start, we consider an arbitrary deterministic vector $w\in \mathbb{R}^{HSA}$ (thought of as a possible realization of $W$) and evaluate the gap in value functions $\bar{V}^{\pi}_{1,w}(s_1) -V^{\pi}_1(s_1)$. We can re-write this quantity by applying Lemma \ref{lem: value gap to bellman}. Let $s=(s_{1},\ldots, s_H)$ denote a random sequence of states drawn by simulating the policy $\pi$ in the MDP $\bar{M}$ from the deterministic initial state $s_1$. Set $a_h=\pi(s_h)$ for $h=1,\ldots, H$. Then
	\begin{align*}
	\bar{V}^{\pi}_{1,w}(s_1) -V^{\pi}_1(s_1) &=  \E\left[  \sum_{h=1}^{H}   w(h,s_h,\pi_h(s_h)) +\bar{R}_{h,s_h, \pi_h(s_h)}-R_{h,s_h, \pi_h(s_h)} +\langle \bar{P}_{h,s_h, \pi_h(s_h)} - P_{h,s_h, \pi_h(s_h)} \, , \,  V^{\pi}_h \rangle\right]  \\
	&\geq H\E\left[ \frac{1}{H}\sum_{h=1}^{H}  \left( w(h,s_h,\pi_h(s_h)) - \sqrt{e(h,s_h, \pi_h(s_h))}  \right)  \right]\\\label{eq: optimism useful}
	\end{align*}
	where the expectation is taken over the sequence of sates $s=(s_1,\ldots, s_H)$. Define $d(h,s)=\frac{1}{H}\Prob(s_h=s)$ for every $h\leq H$ and $s\in \Sc$. Then the above equation can be written as
	\begin{align*}
		\frac{1}{H} \left(\bar{V}^{\pi}_{1,w}(s_1) -V^{\pi}_1(s_1)\right) &\geq  \sum_{s\in \Sc, h\leq H}  d(h,s) \left( w(h,s_h,\pi_h(s_h)) - \sqrt{e(h,s_h, \pi_h(s_h))} \right)\\
		&\geq    \left( \sum_{s\in \Sc, h\leq H}  d(h,s)  w(h,s_h,\pi_h(s_h)) \right) - \sqrt{HS} \sqrt{ \sum_{s\in \Sc, h\leq H} d(h,s)^2  e(h,s_h, \pi_h(s_h))} \\
		&:= X(w)
	\end{align*}
	where the second inequality applies Cauchy-Shwartz. Now, since 
	\[
	d(h,s)W(h,s,\pi_h(s,a))\sim N(0, d(h,s)^2 HSe(h,s,\pi_h(s,a))),
	\]
	we have 
	\[ 
	X(W) \sim N\left(  -  \sqrt{HS \sum_{s\in \Sc, h\leq H} d(h,s)^2  e(h,s_h, \pi_h(s_h))},  HS \sum_{s\in \Sc, h\leq H} d(h,s)^2  e(h,s_h, \pi_h(s_h)) \right). 
	\]
	By standardization,  $\Prob(X(W) \geq 0) = \Phi(-1)$. Therefore, $\Prob(\bar{V}^{\pi}_{1,w}(s_1) -V^{\pi}_1(s_1) \geq 0 ) \geq \Phi(-1)$. 
\end{proof}

\begin{proof}[Proof of Lemma \ref{lem: optimism}]
	Consider some history $\hist$ with $\hat{M}^k \in \Mc^k$. Recall $\pi^k$ is the policy chosen by RLSVI, which is optimal under the MDP $\overline{M}^k = (H, \Sc, \Ac, \hat{P}^k, \hat{R}^k + w^k, s_1)$. Since $\sigma_{k}(h,s,a)= HSe_{k}(h,s,a)$, applying Lemma \ref{lem: optimism temporary} conditioned on $\hist$ shows that with probability at least $\Phi(-1)$, $V(\overline{M}^k, \pi^*)\geq V(M, \pi^*)$. When this occurs, we always have $V(\overline{M}^k, \pi^k) \geq V(M, \pi^*)$, since by definition $\pi^k$ is optimal under $\overline{M}^{k}$. 
\end{proof}

\paragraph{Reduction to bounding online prediction error.}\label{subsec: reduction to online}
The next Lemma shows that the cumulative expected regret of RLSVI is bounded in terms of the total prediction error in estimating the value function of $\pi^k$. The critical feature of the result is it only depends on the algorithm being able to estimate the performance of the policies it actually employs and therefore gathers data about. From here, the regret analysis will follow only concentration arguments.  For the purposes of analysis, we let $\tilde{M}^k$ denote an imagined second sample drawn from the same distribution as the perturbed MDP $\overline{M}^k$ under RLSVI. More formally, let $\tilde{M}^{k}=(H,\Sc, \Ac, \hat{P}^k, \hat{R}^k + \tilde{w}^k, s_1)$ where $\tilde{w}^k(h,s,a)\mid \hist \sim N(0, \sigma_{k}^2(h,s,a))$ is independent Gaussian noise. Conditioned on the history, $\tilde{M}^{k}$ has the same marginal distribution as $\overline{M}^{k}$, but it is statistically independent of the policy $\pi^k$ selected by RLSVI, 
\begin{restatable}[]{lemma}{prediction}\label{lem: reduction to prediction}
For an absolute constant $c=\Phi(-1)^{-1}<6.31$, we have
\begin{align*}
	\regret(M, K, \mathtt{RLSVI}_{\beta}) \leq&  (c+1) \E\left[\sum_{k=1}^{K} |V(\overline{M}^k, \pi^k) - V(M, \pi^k)|  \right] \\
	&+ c \E\left[\sum_{k=1}^{K} |V(\tilde{M}^k, \pi^k) - V(M, \pi^k)|  \right] 
	+ H\underbrace{\sum_{k=1}^{K} \Prob(\hat{M}^k \notin \Mc^k )}_{\leq \pi^2/6}.
\end{align*}	
\end{restatable}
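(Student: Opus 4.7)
The proof plan starts from the textbook decomposition
\[
V(M, \pi^*) - V(M, \pi^k) = \bigl[V(M, \pi^*) - V(\overline{M}^k, \pi^k)\bigr] + \bigl[V(\overline{M}^k, \pi^k) - V(M, \pi^k)\bigr].
\]
The second bracket is bounded in absolute value by $|V(\overline{M}^k, \pi^k) - V(M, \pi^k)|$, which accounts directly for the ``$+1$'' in the coefficient $c+1$. The main work is to control the optimism gap $V(M,\pi^*) - V(\overline{M}^k, \pi^k)$. I would split the expectation along the events $E_k := \{\hat{M}^k \in \Mc^k\}$ and $E_k^c$: on $E_k^c$ the crude bound $V(M,\pi^*) - V(M,\pi^k) \leq H$ produces the $H\sum_k \Prob(\hat{M}^k \notin \Mc^k)$ term.

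On $E_k$, the plan is to invoke the Abeille--Lazaric style anti-concentration argument, adapted to RL through Lemma \ref{lem: optimism}. Let $\tilde\pi^k$ denote an optimal policy for the independent copy $\tilde{M}^k$. Because $\tilde{M}^k$ and $\overline{M}^k$ have the same conditional distribution given $\hist$, Lemma \ref{lem: optimism} also gives $\Prob(V(\tilde{M}^k, \tilde\pi^k) \geq V(M,\pi^*) \mid \hist, E_k) \geq p := \Phi(-1)$, and this ``fresh'' optimism event is independent of $\overline{M}^k$ (and hence of $\pi^k$) given $\hist$. On this event $V(M,\pi^*)$ is dominated by $V(\tilde{M}^k, \tilde\pi^k)$, so
\[
(V(M,\pi^*) - V(\overline{M}^k, \pi^k))^+ \cdot \ind\{V(\tilde{M}^k, \tilde\pi^k) \geq V(M,\pi^*)\} \leq |V(\overline{M}^k, \pi^k) - V(\tilde{M}^k, \tilde\pi^k)|.
\]
Taking conditional expectations and using independence to factor out the probability of the optimism event yields
\[
\E[(V(M,\pi^*) - V(\overline{M}^k, \pi^k))^+ \mid \hist, E_k] \leq \tfrac{1}{p}\,\E[|V(\overline{M}^k, \pi^k) - V(\tilde{M}^k, \tilde\pi^k)| \mid \hist, E_k].
\]

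The last step rewrites the right-hand side in terms of $\pi^k$ only. A short case analysis using the two optimalities $V(\overline{M}^k, \pi^k) \geq V(\overline{M}^k, \tilde\pi^k)$ and $V(\tilde{M}^k, \tilde\pi^k) \geq V(\tilde{M}^k, \pi^k)$ shows that $|V(\overline{M}^k, \pi^k) - V(\tilde{M}^k, \tilde\pi^k)|$ is dominated by a policy-aligned difference in which the two MDPs are evaluated on a common policy. Conditional exchangeability of $(\overline{M}^k, \pi^k)$ with $(\tilde{M}^k, \tilde\pi^k)$ then identifies the resulting symmetric pieces in expectation, collapsing them to a single $|V(\overline{M}^k, \pi^k) - V(\tilde{M}^k, \pi^k)|$-type quantity. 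A final triangle inequality through the true value $V(M, \pi^k)$ produces $|V(\overline{M}^k, \pi^k) - V(M, \pi^k)| + |V(\tilde{M}^k, \pi^k) - V(M, \pi^k)|$ multiplied by $1/p = c$, which after summing over $k$ combines with the $+1$ contribution from the second bracket and the $H\sum_k\Prob(\hat{M}^k \notin \Mc^k)$ term to yield the displayed bound.

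The main obstacle I anticipate is bookkeeping the constants $c+1$ and $c$ exactly. A naive triangle inequality on $|V(\overline{M}^k, \pi^k) - V(\tilde{M}^k, \tilde\pi^k)|$ routed through the policy mismatch introduces a spurious factor of two; obtaining the stated constants requires carefully exploiting that under conditional exchangeability the ``swapped'' term has the same expectation as the original, so that the two symmetric positive-part contributions coming out of the case analysis are combined rather than added twice.
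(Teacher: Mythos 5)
Your plan is sound in outline and takes a genuinely different route from the paper's. The paper never introduces a second optimal policy $\tilde\pi^k$: it bounds $V(M,\pi^*)-\E_k[V(\overline{M}^k,\pi^k)]$ by $c\,\E_k[(V(\overline{M}^k,\pi^k)-\E_k[V(\overline{M}^k,\pi^k)])^+]$ via Markov's inequality applied to the deviation of $V(\overline{M}^k,\pi^k)$ from its \emph{conditional mean}, then replaces that mean by $\mu(\pi^k)=\E_k[V(\tilde{M}^k,\pi^k)\mid\pi^k]$ (using $\mu(\pi^k)\le\E_k[V(\overline{M}^k,\pi^k)]$, which holds because $\pi^k$ maximizes $V(\overline{M}^k,\cdot)$) and applies Jensen to pull out $\E_k|V(\overline{M}^k,\pi^k)-V(\tilde{M}^k,\pi^k)|$. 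Your version instead uses the optimism of the independent copy $(\tilde{M}^k,\tilde\pi^k)$ directly and factors the probability out by conditional independence; this is closer to the original linear-bandit argument of Abeille--Lazaric and makes the role of the fresh optimistic sample more transparent, at the cost of having to eliminate $\tilde\pi^k$ afterwards.

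The one step that does not work as written is the constant bookkeeping you flag at the end. Routing through $|V(\overline{M}^k,\pi^k)-V(\tilde{M}^k,\tilde\pi^k)|$ and your two-case analysis gives, after exchangeability, $2\,\E_k[(V(\overline{M}^k,\pi^k)-V(\tilde{M}^k,\pi^k))^+]$, and this equals $\E_k|X|+\E_k[X]$ with $X=V(\overline{M}^k,\pi^k)-V(\tilde{M}^k,\pi^k)$. Since $\E_k[X]=\E_k[\sup_\pi V(\overline{M}^k,\pi)]-\E_k[\mu(\pi^k)]\ge 0$, the two symmetric positive parts do \emph{not} collapse to $\E_k|X|$; ``combining rather than adding'' still leaves you with up to $2c$ in place of $c$. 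The fix is to never form the absolute value: the anti-concentration step only requires
\[
\left(V(M,\pi^*)-V(\overline{M}^k,\pi^k)\right)^+\ind\{V(\tilde{M}^k,\tilde\pi^k)\ge V(M,\pi^*)\}\;\le\;\left(V(\tilde{M}^k,\tilde\pi^k)-V(\overline{M}^k,\pi^k)\right)^+,
\]
and the right side is at most $(V(\tilde{M}^k,\tilde\pi^k)-V(\overline{M}^k,\tilde\pi^k))^+$ because $V(\overline{M}^k,\pi^k)\ge V(\overline{M}^k,\tilde\pi^k)$. By conditional exchangeability this single positive part has expectation $\E_k[X^+]\le\E_k|X|$, and the triangle inequality through $V(M,\pi^k)$ then yields exactly the coefficients $c+1$ and $c$ of the statement. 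With that modification your argument is a valid alternative proof.
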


\paragraph{Online prediction error bounds.}
We complete the proof with concentration arguments. Set $\epsilon_R^{k}(h,s,a)= \hat{R}^{k}_{h,s,a}  - R_{h,s,a} \in \mathbb{R}$ and
$\epsilon_{P}^k(h,s,a) = \hat{P}^{k}_{h, s, a} - P_{h,s_h, a_h} \in \mathbb{R}^S$ to be the error in estimating mean the mean reward and transition vector corresponding to $(h,s,a)$. The next result follows by bounding each term in Lemma \ref{lem: reduction to prediction}. This is done by using Lemma \ref{lem: value gap to bellman} to expand the terms $V(\overline{M}, \pi^k)-V(M, \pi^k)$ and $V(\overline{M}, \pi^k)-V(\tilde{M}, \pi^k)$. We focus our analysis on bounding $\E\left[\sum_{k=1}^{K} |V(\overline{M}^k, \pi^k) - V(M, \pi^k)|  \right]$. The other term can be bounded in an identical manner\footnote{In particular, an analogue of Lemma 7 holds \ref{lem: final regret decomposition} holds where we replace $\overline{M}^k$ with $\tilde{M}^k$, $V^{k}_{h+1}$ with the value function $\tilde{V}^{k}_{h+1}$ corresponding to policy $\pi^k$ in the MDP $\tilde{M}^k$, and the Gaussian noise $w^k$ with the fictitious noise terms $\tilde{w}^k$.}, so we omit this analysis. 
\begin{restatable}[]{lemma}{finalRegretDecomposition}\label{lem: final regret decomposition}
Let $c=\Phi(-1)^{-1}< 6.31$. Then for any $K \in \mathbb{N}$,
	\begin{eqnarray*}\E\left[\sum_{k=1}^{K} |V(\overline{M}^k, \pi^k) - V(M, \pi^k)|  \right]
&\leq& \sqrt{\E \sum_{k=1}^{K} \sum_{h=1}^{H-1}  \left\|   \epsilon_P^{k}(h, s^k_h, a^k_h) \right\|_1^2  }\,  \sqrt{\E \sum_{k=1}^{K} \sum_{h=1}^{H-1} \left\| V^{k}_{h+1}\right\|_{\infty}^2 } \\
		&&+\E\left[\sum_{k=1}^{K} \sum_{h=1}^{H} |\epsilon_R^{k}(h, s^k_h, a^k_h)| \right] + \E\left[\sum_{k=1}^{K} \sum_{h=1}^{H} |w^{k}(h, s^k_h, a^k_h)| \right].
	\end{eqnarray*}
\end{restatable}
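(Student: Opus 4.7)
The plan is to apply Lemma \ref{lem: value gap to bellman} to expand the value function difference into a sum of per-step Bellman errors, observe that the trajectory $(s_h^k, a_h^k)$ used inside the expectation is exactly distributed according to rolling out $\pi^k$ in the true MDP $M$, and then bound the resulting three error sources (reward estimation error, Gaussian perturbation, transition estimation error) by elementary inequalities.

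Concretely, I would apply Lemma \ref{lem: value gap to bellman} with the roles chosen so that the trajectory expectation is under the \emph{true} MDP: set $\overline{M} = M$ and $\tilde{M} = \overline{M}^k$, and take $\pi = \pi^k$. Then $\tilde{V}^{\pi^k}_{h+1}$ is precisely the value function $V^k_{h+1}$ of $\pi^k$ in $\overline{M}^k$, which is why this orientation matches the form of the target inequality. The lemma yields
\[
V(M,\pi^k) - V(\overline{M}^k, \pi^k) = \E_{\pi^k, M}\!\left[ \sum_{h=1}^{H}\!\Big( R_{h,s_h,\pi^k(s_h)} - \hat{R}^k_{h,s_h,\pi^k(s_h)} - w^k(h,s_h,\pi^k(s_h))\Big) + \langle P_{h,s_h,\pi^k(s_h)} - \hat{P}^k_{h,s_h,\pi^k(s_h)},\, V^k_{h+1}\rangle\right].
\]
Taking absolute values, pushing them through the expectation via Jensen, and using the triangle inequality to split the reward discrepancy into $|\epsilon_R^k| + |w^k|$ gives a bound on $|V(\overline{M}^k,\pi^k)-V(M,\pi^k)|$ as an expectation over the $(\pi^k,M)$-trajectory of three nonnegative terms. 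Since $V^k_{H+1}\equiv 0$, the transition summand vanishes at $h=H$, so that sum effectively runs to $H-1$.

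Next I would identify this expectation with an expectation over the actually observed trajectory. By construction, conditional on $\hist$ and $\pi^k$, the realized $(s_h^k, a_h^k)$ is drawn exactly from rolling out $\pi^k$ in $M$ from $s_1$, so tower-taking over $\hist$ and summing across episodes converts every $\E_{\pi^k,M}[\,\cdot\, (s_h,\pi^k(s_h))]$ into $\E[\,\cdot\,(s_h^k,a_h^k)]$. This delivers
\[
\sum_{k=1}^K \E\big|V(\overline{M}^k,\pi^k)-V(M,\pi^k)\big| \le \E\!\sum_{k,h}\!|\epsilon_R^k(h,s_h^k,a_h^k)| + \E\!\sum_{k,h}\!|w^k(h,s_h^k,a_h^k)| + \E\!\sum_{k}\!\sum_{h=1}^{H-1}\!\big|\langle \epsilon_P^k(h,s_h^k,a_h^k),\, V^k_{h+1}\rangle\big|.
\]

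Finally I would handle the transition term by Hölder, $|\langle \epsilon_P^k, V^k_{h+1}\rangle|\le \|\epsilon_P^k\|_1\,\|V^k_{h+1}\|_\infty$, and then apply Cauchy--Schwarz to the expectation of the resulting product to separate the two factors and obtain the claimed square-root form. The remaining reward and noise terms already appear in the target bound, so no further manipulation is needed. I do not anticipate any serious obstacle: the only real care required is (i) choosing the correct orientation in Lemma \ref{lem: value gap to bellman} so that the trajectory is drawn from the true dynamics (allowing replacement of the fictitious trajectory expectation by the observed $(s_h^k,a_h^k)$), and (ii) noticing that the $h=H$ transition term drops out because $V^k_{H+1}=0$, which is what permits the outer sum on the first term of the bound to run only up to $H-1$.
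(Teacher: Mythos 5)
Your proposal is correct and follows essentially the same route as the paper's proof: the same orientation of Lemma \ref{lem: value gap to bellman} (trajectory under the true MDP $M$, value function $V^k_{h+1}$ from $\overline{M}^k$), followed by the triangle inequality, H\"older to get $\|\epsilon_P^k\|_1\|V^k_{h+1}\|_\infty$, and Cauchy--Schwarz to split the two square-root factors. The observations you flag as requiring care (the orientation choice and the vanishing $h=H$ transition term) are exactly the ones the paper relies on.
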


The remaining lemmas complete the proof. At each stage, RLSVI adds Gaussian noise with standard deviation no larger than $\tilde{O}(H^{3/2}\sqrt{S})$. Ignoring extremely low probability events, we expect, $\left\| V^{k}_{h+1}\right\|_{\infty}\leq \tilde{O}(H^{5/2}\sqrt{S})$ and hence  $\sum_{h=1}^{H-1} \left\| V^{k}_{h+1}\right\|_{\infty}^2  \leq \tilde{O}(H^{6}S)$. The proof of this Lemma makes this precise by applying appropriate maximal inequalities. 
\begin{restatable}[]{lemma}{vfuncNorm}\label{lem: value function norm bound}
	\[
	\sqrt{\E \sum_{k=1}^{K} \sum_{h=1}^{H-1} \left\| V^{k}_{h+1}\right\|_{\infty}^2 }  = \tilde{O}\left( H^{3}\sqrt{SK} \right) 
	\]
\end{restatable}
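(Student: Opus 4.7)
\textbf{Proof proposal for Lemma~\ref{lem: value function norm bound}.}

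The plan is to control $\|V^k_{h+1}\|_\infty$ by a deterministic term of order $H$ plus a functional of the Gaussian noise $w^k$, and then apply a standard Gaussian maximal inequality.

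First I would observe that under $\overline{M}^k = (H,\Sc,\Ac,\hat{P}^k,\hat{R}^k+w^k,s_1)$ the rewards at period $h'$ are uniformly bounded in magnitude by $1 + W^k_{h'}$, where $W^k_{h'} := \max_{s,a} |w^k(h',s,a)|$; indeed $\hat{R}^k_{h',s,a}\in[0,1]$ since it is an empirical mean of rewards in $[0,1]$ (or $0$ when $n_k(h',s,a)=0$). Since $V^k_{h+1}(s)$ is the value of some policy under $\overline{M}^k$ starting at $(h+1,s)$, a telescoping argument gives
\[
\|V^k_{h+1}\|_\infty \;\leq\; \sum_{h'=h+1}^{H}\bigl(1 + W^k_{h'}\bigr) \;\leq\; H \;+\; \sum_{h'=1}^{H} W^k_{h'}.
\]
Squaring, using $(a+b)^2\le 2a^2+2b^2$ and Cauchy--Schwarz on the inner sum, and then summing over $h=1,\ldots,H-1$ yields
\[
\sum_{h=1}^{H-1}\|V^k_{h+1}\|_\infty^2 \;\leq\; 2H^3 \;+\; 2H^2\sum_{h'=1}^{H}\bigl(W^k_{h'}\bigr)^2.
\]

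Next I would bound $\E[(W^k_{h'})^2]$. The variables $\{w^k(h',s,a)\}_{s,a}$ are independent centered Gaussians with variances $\sigma_k^2(h',s,a)=\beta_k/(n_k(h',s,a)+1)\leq \beta_k$. A standard maximal inequality for the supremum of at most $SA$ Gaussians with common variance bound $\beta_k$ gives
\[
\E\!\left[\max_{s,a}|w^k(h',s,a)|^2\right] \;\leq\; C\,\beta_k\log(2SA)
\]
for an absolute constant $C$ (e.g.\ from the sub-Gaussian maximal inequality applied to $\pm w^k(h',s,a)$, plus integration of the resulting tail bound). Substituting $\beta_k=\tfrac{1}{2}SH^3\log(2HSAk)=\tilde O(SH^3)$ and summing over $h'\le H$ produces
\[
\E\!\left[\sum_{h=1}^{H-1}\|V^k_{h+1}\|_\infty^2\right] \;\leq\; 2H^3 + 2H^3\cdot C\beta_k\log(2SA) \;=\; \tilde O(SH^6).
\]
Finally, summing over $k\le K$ and taking square roots yields the stated bound $\tilde O(H^3\sqrt{SK})$.

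There is no substantive obstacle here; the argument is essentially a Gaussian maximal inequality combined with the crude reward bound $1+W^k_{h'}$. The only point deserving care is that the noise variances $\sigma_k^2(h,s,a)$ are random and depend on the history, which is why I work with the uniform upper bound $\beta_k$; this is also exactly where the extra $\sqrt{S}$ factor (baked into $\beta_k$) enters the final regret, as the author notes in the discussion following Theorem~\ref{thm: main}.
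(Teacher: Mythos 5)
Your proof is correct and follows essentially the same route as the paper's: bound $\|V^k_{h+1}\|_\infty$ by $H$ plus the maxima of the Gaussian perturbations, then control those maxima via a maximal inequality together with the uniform variance bound $\beta_k = \tilde{O}(SH^3)$. The only cosmetic difference is that you apply the maximal inequality per period over $SA$ variables and sum, whereas the paper applies a single chi-squared maximal inequality over all $KHSA$ variables at once; both give the same bound up to logarithmic factors.
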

The next few lemmas are essentially a consequence of analysis in \cite{jaksch2010near}, and many subsequent papers. We give proof sketches in the appendix. The main idea is to apply known concentration inequalities to bound  $\left\|  \epsilon_{P}^k(h,s,a)  \right\|_1^2  $, $ |\epsilon_R^{k}(h, s^k_h, a^k_h)|$ or $|w^{k}(h, s^k_h, a^k_h)|$ in terms of either $1/n_k(h,s_h, a_h)$ or $1/\sqrt{n_k(h,s_h, a_h)}$. The pigeonhole principle gives  $\sum_{k=1}^{K} \sum_{h=1}^{H-1}  1/n_k(h,s_h, a_h) = O(\log(SAKH)$ and $\sum_{k=1}^{K} \sum_{h=1}^{H-1}  (1/\sqrt{n_k(h,s_h, a_h)}) = O(\sqrt{SAKH})$ . 

\begin{restatable}[]{lemma}{perrors}\label{lem: perrors}
	\[
	\E\left[ \sum_{k=1}^{K} \sum_{h=1}^{H-1}  \left\|  \epsilon_{P}^k(h,s,a)  \right\|_1^2    \right]
	=\tilde{O}\left( S^2 A H \right)
	\]
\end{restatable}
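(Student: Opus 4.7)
I plan to prove the lemma in three steps: an expected $L_1$-variance bound for a single empirical transition distribution, a reindexing of the double sum by visit order, and a final harmonic summation.

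The workhorse inequality is that for any $(h,s,a)$ and any $n \geq 1$, if $\hat{P}_n$ denotes the empirical distribution of $n$ observed next-states at $(h,s,a)$, then $\E\|\hat{P}_n - P_{h,s,a}\|_1^2 \leq S/n$. I would derive this by combining Cauchy-Schwarz ($\|v\|_1^2 \leq S\|v\|_2^2$) with a coordinate-wise variance computation: since each observed next-state $X_i$ is conditionally distributed as $P_{h,s,a}$ given the past by the Markov property, the increments $\{\ind_{X_i = s'} - P_{h,s,a}(s')\}_i$ form a martingale difference sequence with $\mathrm{Cov}(\ind_{X_i=s'},\ind_{X_j=s'})=0$ for $i \neq j$, hence $\mathrm{Var}(\hat{P}_n(s')) = P_{h,s,a}(s')(1-P_{h,s,a}(s'))/n$. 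Summing over $s'$ gives $\E\|\hat{P}_n - P_{h,s,a}\|_2^2 \leq 1/n$.

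For the second step, I would rearrange the double sum by visit count within each tuple. Letting $\tau_i$ index the episode of the $i$-th visit to $(h,s,a)$ and $N^K(h,s,a)$ count visits through episode $K$, we have $n_{\tau_i}(h,s,a) = i-1$, so
\[
\E\sum_{k=1}^K \sum_{h=1}^{H-1}\|\epsilon_P^k(h,s_h^k,a_h^k)\|_1^2 = \sum_{h=1}^{H-1}\sum_{s,a}\E\sum_{i=1}^{N^K(h,s,a)}\|\hat{P}^{\tau_i}_{h,s,a} - P_{h,s,a}\|_1^2.
\]
The $i=1$ summand contributes $\|P_{h,s,a}\|_1^2 = 1$ by the paper's convention that $\hat{P}^k_{h,s,a} = 0$ when $n_k = 0$. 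For $i\geq 2$ I would couple the adaptive stream of observations at $(h,s,a)$ with an infinite conditionally-$P_{h,s,a}$ sequence (padding after $\tau_{N^K}$ with fresh draws) so that on $\{N^K\geq i\}$ the deviation $\|\hat{P}^{\tau_i}-P_{h,s,a}\|_1^2$ agrees with the corresponding empirical deviation based on $i-1$ samples, to which the first step applies. This yields $\E[\ind\{N^K\geq i\}\|\hat{P}^{\tau_i}-P_{h,s,a}\|_1^2] \leq S/(i-1)$, and summing in $i$ produces a harmonic series $1 + S\sum_{i=2}^K 1/(i-1) = O(S\log K)$ per tuple. Multiplying by the $(H-1)SA$ tuples yields $\tilde{O}(HS^2A)$, as required.

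The main obstacle I anticipate is invoking the variance bound at the random visit time $\tau_i$: the identity of the $i$-th visited tuple and the empirical deviation are coupled through the shared history, so one cannot directly treat $\hat{P}^{\tau_i}_{h,s,a}$ as an empirical distribution of $i-1$ i.i.d. samples. The extension-coupling described above cleanly decouples them; an equivalent and slightly more mechanical alternative is an optional-stopping argument on the coordinate martingale $M_m(s') = \sum_{i=1}^{m}(\ind_{X_i = s'}-P_{h,s,a}(s'))$ stopped at $m = N^K$. Everything else reduces to the pigeonhole-style reindexing and the harmonic summation in $K$.
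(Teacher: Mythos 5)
Your proposal is correct, but it takes a genuinely different route from the paper. The paper's proof invokes the $L_1$ concentration inequality of Weissman et al.\ (Lemma \ref{lem: weissman}), takes a union bound over all $(h,s,a)$ and all sample counts to build a ``good event'' on which $\|\hat{P}^k_{h,s,a}-P_{h,s,a}\|_1^2 \lesssim S\log(1+HSAK)/(n_k(h,s,a)+1)$ uniformly, handles the complementary bad event with the trivial bound $\|\cdot\|_1\leq 2$, and then applies the pigeonhole sum $\sum_{h,s,a}\sum_n 1/(n+1)=O(HSA\log K)$. You instead bound the quantity directly in expectation: Cauchy--Schwarz reduces $\E\|\hat P_n-P\|_1^2$ to $S\,\E\|\hat P_n-P\|_2^2$, and a coordinate-wise variance computation gives $\E\|\hat P_n-P\|_2^2\leq 1/n$, after which the same reindexing-by-visit-count and harmonic summation finish the job. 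Your route is more elementary (no exponential tail bound, no $2^S$ factor, no union bound), saves a logarithmic factor, and explicitly accounts for the $n_k=0$ convention, which the paper's sketch glosses over; its price is that you must confront the stopping-time issue --- the number of visits is adaptive and coupled with the empirical estimates --- which the high-probability union bound over all $n$ silently absorbs. You identify this obstacle correctly, and your proposed fix (the stack-of-rewards coupling, pre-generating an i.i.d.\ stream per tuple so that $\hat P^{\tau_i}_{h,s,a}$ is literally the empirical distribution of $i-1$ i.i.d.\ draws, then dropping the nonnegative indicator $\ind\{N^K\geq i\}$) is exactly the device the paper itself uses in the proof of Lemma \ref{lem: concentration}, so the argument is sound. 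The paper's route does produce a reusable high-probability event, which is the standard UCRL2-style artifact, but for the purpose of this expectation bound your argument is cleaner.
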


\begin{restatable}[]{lemma}{rerrors}\label{lem: rerrors}
	\[
	\E\left[\sum_{k=1}^{K} \sum_{h=1}^{H} |\epsilon_R^{k}(h, s^k_h, a^k_h)|\right] = \tilde{O} \left(\sqrt{SA KH}\right)
	\]
\end{restatable}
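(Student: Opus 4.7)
The plan is to combine a high-probability pointwise bound on $|\epsilon_R^k(h,s,a)|$ via Hoeffding's inequality with a pigeonhole argument that reduces the cumulative sum to a deterministic counting inequality. Conditional on $n_k(h,s,a) = n \geq 1$, the empirical reward $\hat{R}^k_{h,s,a}$ is the average of $n$ i.i.d.\ samples from $\Rc_{h,s,a}$, each bounded in $[0,1]$, so Hoeffding gives
\[
\Prob\bigl(|\epsilon_R^k(h,s,a)| > t \;\big|\; n_k(h,s,a) = n\bigr) \leq 2\exp(-2nt^2).
\]
After a union bound over every $(h,s,a)$ and every deterministic value $n\in\{1,\ldots, kH\}$ that the counter $n_k(h,s,a)$ could take, I define a good event $\mathcal{E}_k$ on which $|\epsilon_R^k(h,s,a)| \leq \sqrt{\log(c\,HSAk^2)/(n_k(h,s,a)+1)}$ holds simultaneously for all $(h,s,a)$, with $c$ chosen large enough that $\sum_k \Prob(\mathcal{E}_k^c) = O(1)$. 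On $\mathcal{E}_k^c$ the trivial bound $|\epsilon_R^k|\leq 1$ suffices, and its total contribution to the expected sum is at most $H\sum_k \Prob(\mathcal{E}_k^c) = O(H)$, absorbed into $\tilde{O}$.

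On $\mathcal{E}_k$ the sum is controlled (up to a $\tilde{O}(1)$ logarithmic factor) by $\sum_{k=1}^K\sum_{h=1}^H (n_k(h,s_h^k,a_h^k)+1)^{-1/2}$. I would handle this with the standard pigeonhole rewriting that groups episodes by the triple they visit: each $(h,s,a)$ is visited at sample counts $0,1,\ldots,N(h,s,a)-1$ where $N(h,s,a) := n_{K+1}(h,s,a)$, so
\[
\sum_{k=1}^K \sum_{h=1}^H \frac{1}{\sqrt{n_k(h,s_h^k,a_h^k)+1}} = \sum_{h,s,a}\sum_{n=1}^{N(h,s,a)}\frac{1}{\sqrt{n}} \leq 2\sum_{h,s,a}\sqrt{N(h,s,a)}.
\]
Combining the identity $\sum_{h,s,a} N(h,s,a) = KH$ (each episode contributes exactly $H$ visits) with Cauchy--Schwarz gives $\sum_{h,s,a}\sqrt{N(h,s,a)} \leq \sqrt{HSA \cdot KH}$, which delivers the stated rate up to polylog factors.

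The main technical subtlety lies in the concentration step: $n_k(h,s,a)$ is random and data-adaptive, so Hoeffding cannot be applied for a single value of $n$ alone. The remedy --- union bounding over every deterministic value the counter can take by episode $k$, or more elegantly invoking a stopping-time Hoeffding --- costs only a logarithmic factor invisible in the $\tilde{O}$ notation. After that step, everything reduces to the same pigeonhole bookkeeping that underlies Lemma~\ref{lem: perrors}, and, with the analogous Gaussian tail bound in place of Hoeffding, will also handle the noise-term sum $\sum_{k,h}|w^k(h,s_h^k,a_h^k)|$ required in Lemma~\ref{lem: final regret decomposition}.
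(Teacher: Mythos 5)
Your proposal is correct and follows essentially the same route as the paper's proof sketch: Hoeffding plus a union bound (over the deterministic values the visit counter can take) to get $|\epsilon_R^k(h,s,a)|\lesssim\sqrt{\log(\cdot)/(n_k(h,s,a)+1)}$ outside a bad event of $O(1)$ total contribution, followed by the pigeonhole/Cauchy--Schwarz count; your explicit treatment of the adaptivity of $n_k(h,s,a)$ is the same device the paper uses via its ``stack of rewards'' construction in Lemma~\ref{lem: concentration}. One caveat you share with the paper's own sketch: the final count $\sum_{h,s,a}\sqrt{N(h,s,a)}\le\sqrt{HSA\cdot KH}$ equals $H\sqrt{SAK}$, which is a factor $\sqrt{H}$ larger than the stated $\sqrt{SAKH}$ --- a discrepancy that is immaterial to Theorem~\ref{thm: main}, where this term is dominated by the others.
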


\begin{restatable}[]{lemma}{werrors}\label{lem: werrors}
	\[
	\E\left[\sum_{k=1}^{K} \sum_{h=1}^{H} |w^{k}(h, s^k_h, a^k_h)|\right] = \tilde{O} \left(H^{3/2}S\sqrt{A KH}\right)
	\]
\end{restatable}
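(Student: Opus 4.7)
}
The main subtlety is that $w^k(h, s^k_h, a^k_h)$ is evaluated at the realized trajectory $(s^k_h, a^k_h)$, which itself depends on $w^k$ through the policy $\pi^k$. We therefore cannot directly use $\E|w^k(h,s,a)| = \sqrt{2/\pi}\,\sigma_k(h,s,a)$ under the sampling distribution. The plan is to decouple these by establishing a uniform-in-$(h,s,a,k)$ high probability bound on the Gaussian noise, and then summing along the realized trajectory by a pigeonhole argument.

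First, for each $(h,s,a)$ with $k \leq K$, use the standard Gaussian tail bound $\Prob(|Z|\geq t) \leq 2e^{-t^2/2}$ together with a union bound over at most $HSAK$ tuples to obtain a constant $C$ such that, with probability at least $1-1/K$,
\[
|w^k(h,s,a)| \leq \sigma_k(h,s,a)\, C\sqrt{\log(HSAK)} \qquad \text{for all } (k,h,s,a) \text{ with } k\leq K.
\]
Call this event $\mathcal{E}$. On the complement $\mathcal{E}^c$, a crude bound suffices: each $|w^k|$ is absolutely integrable and the total contribution to the expectation is lower order because $\Prob(\mathcal{E}^c) \leq 1/K$ and the sum of $\E|w^k|^2$ is at most polynomial in the problem parameters (so by Cauchy--Schwarz one picks up a $1/\sqrt{K}$ factor that kills the leading rate).

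On $\mathcal{E}$, plug in $\sigma_k(h,s,a) = \sqrt{\beta_k/(n_k(h,s,a)+1)}$ and $\beta_k = \tfrac{1}{2}SH^3\log(2HSAk)$, giving
\[
\sum_{k=1}^{K}\sum_{h=1}^{H} |w^k(h,s^k_h,a^k_h)|
\;\leq\; C\sqrt{\log(HSAK)}\,\sqrt{\beta_K}\,\sum_{k=1}^{K}\sum_{h=1}^{H}\frac{1}{\sqrt{n_k(h,s^k_h,a^k_h)+1}}.
\]
Apply Cauchy--Schwarz to the double sum, then invoke the pigeonhole bound (mentioned in the paragraph preceding Lemma \ref{lem: perrors}, and used identically in Lemmas \ref{lem: perrors} and \ref{lem: rerrors}):
\[
\sum_{k=1}^{K}\sum_{h=1}^{H}\frac{1}{\sqrt{n_k(h,s^k_h,a^k_h)+1}}
\;\leq\; \sqrt{KH}\,\sqrt{\sum_{k=1}^{K}\sum_{h=1}^{H}\frac{1}{n_k(h,s^k_h,a^k_h)+1}}
\;=\; \tilde{O}\bigl(\sqrt{KH}\cdot\sqrt{SAH}\bigr).
\]
Combining, the high-probability bound is
\[
\tilde{O}\bigl(\sqrt{\beta_K}\cdot\sqrt{SAH\cdot KH}\bigr)
\;=\; \tilde{O}\bigl(\sqrt{SH^3}\cdot\sqrt{SAKH^2}\bigr)
\;=\; \tilde{O}\bigl(H^{3/2}S\sqrt{AKH}\bigr),
\]
matching the claim after accounting for the negligible $\mathcal{E}^c$ contribution.

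The main obstacle, as noted at the outset, is the coupling between $w^k$ and the trajectory $(s^k_h,a^k_h)_h$ under $\pi^k$. I expect the uniform Gaussian tail bound to cleanly break this dependence: once $|w^k(h,s,a)|$ is dominated by a deterministic multiple of $\sigma_k(h,s,a)$ simultaneously for every $(h,s,a)$, the trajectory-specific randomness enters only through the visit counts $n_k$, which are then handled by the same pigeonhole machinery as in Lemmas \ref{lem: perrors} and \ref{lem: rerrors}. The only other delicate point is verifying that on $\mathcal{E}^c$ the unbounded-support half-normal summands contribute $o(H^{3/2}S\sqrt{AKH})$ in expectation, which is a routine Cauchy--Schwarz estimate using a polynomial upper bound on $\E[|w^k(h,s,a)|^2]\leq \beta_k$.
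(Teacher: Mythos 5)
Your proposal is correct and follows essentially the same route as the paper: both decouple the noise from the trajectory by bounding $|w^k(h,s,a)|/\sigma_k(h,s,a)$ uniformly over all $(k,h,s,a)$ (the paper via $\E[\max_{k,h,s,a}|\xi^k_{h,s,a}|]=O(\sqrt{\log(HSAK)})$, you via a union-bound high-probability event plus a Cauchy--Schwarz estimate on its complement) and then reduce to the deterministic pigeonhole bound on $\sum_{k,h}\sigma_k(h,s^k_h,a^k_h)$. The two variants are interchangeable here, so no further comment is needed.
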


%

%

\paragraph{Acknowledgments.} Much of my understanding of randomized value functions comes from a collaboration with Ian Osband, Ben Van Roy, and Zheng Wen. Mark Sellke and Chao Qin each noticed the same error in the proof of Lemma \ref{lem: reduction to prediction} in the initial draft of this paper. The lemma has now been revised. I am extremely grateful for their careful reading of the paper. 

\bibliographystyle{plainnat}
\bibliography{references}

\newpage

\appendix

\section{Omitted Proofs}
\subsection{Proof of Lemma \ref{lem: concentration}}
\concentration* 
\begin{proof}
	The following construction is the standard way concentration inequalities are applied in bandit models and tabular reinforcement learning. See the discussion of what \citet{lattimore2018bandit} calls a ``stack of rewards'' model in Subsection 4.6. 	
	
	For every tuple $z=(h,s,a)$, generate two i.i.d sequences of random variables $r_{z,n}\sim \Rc_{h,s,a}$ and $s_{z,n}\sim P_{h,s,a}(\cdot)$. Here  $r_{(h,s,a),n}$ denotes the reward and $s_{(h,s,a),n}$ denotes the state transition generated from the $n$th time action $a$ is played in state $s$, period $n$.  Set
	\[ 
	Y_{z,n} = r_{z,n} + V_{h+1}^*(s_{z,n}) \qquad n\in \mathbb{N}.
	\]
	These are i.i.d, with $Y_{z,n} \in [0,H]$ since $\| V_{h+1}^*\|_{\infty } \leq H-1$, and satisfies 
	\[ 
	\E[Y_{z,n}] = R_{h,s,a} + \langle P_{h,s,a} \, , \, V^*_{h+1} \rangle.
	\]  
	By Hoeffding's inequality, for any $\delta_n \in (0,1)$,
	\[ 
	\Prob\left(  \left| \frac{1}{n}\sum_{i=1}^{n} Y_{(h,s,a),i}  - R_{h,s,a} - \langle P_{h,s,a} \, , \, V^*_{h+1} \rangle \right|  \geq  H\sqrt{\frac{\log(2/\delta_n)}{2n}} \right)  \leq \delta_n.
	\]
	For $\delta_n=\frac{1}{HSAn^2}$, a union bound over $HSA$ values of $z=(h,s,a)$ and all possible $n$ gives 
	\[ 
	\Prob\left( \bigcup_{h,s,a, n} \left\{ \left| \frac{1}{n}\sum_{i=1}^{n} Y_{(h,s,a),i}  - R_{h,s,a} - \langle P_{h,s,a} \, , \, V^*_{h+1} \rangle \right|  \geq  H\sqrt{\frac{\log(2/\delta_n)}{2n}}\right\} \right)  \leq  \sum_{n=1}^{\infty} \frac{1}{n^2}  = \frac{\pi^2}{6}.
	\]
	Now, by definition, if $n_{k}(h,s,a)=n>0$, we have 
	\[
	\hat{R}^{k}_{h,s,a} + \langle \hat{P}^{k}_{h,s,a}  \, ,\, V^*_{h+1} \rangle = \frac{1}{n} \sum_{i=1}^{n} Y_{(h,s,a), i}.
	\]
	Therefore, the above shows 
	\[
	\Prob\left( \exists (k,h,s,a)   \, : \, n_{k}(h,s,a)>0 \, , \, \left| \hat{R}^{k}_{h,s,a} -R_{h,s,a}+ \langle \hat{P}^{k}_{h,s,a} - P_{h,s,a} \, ,\, V^*_{h+1} \rangle    \right| \geq  H\sqrt{ \frac{ \log\left( 2HSA n_{k}(h,s,a)  \right) }{2n_k(s,h,a)}}  \right) 
	\]
	is upper bounded by $\pi^2/6$. Note that by definition, when $n_{k}(h,s,a)>0$ we have \[
	\sqrt{e^{k}(h,s,a)} \geq  H\sqrt{ \frac{ \log\left( 2HSA n_{k}(h,s,a)  \right) }{2n_k(s,h,a)}}
	\]
	and hence this concentration inequality holds with $\sqrt{e^k(h,s,a)}$ on the right hand side. 	
	When $n_{k}(h,s,a)=0$, we have the trivial bound 
	\[ 
	\left| \hat{R}^{k}_{h,s,a} -R_{h,s,a}+ \langle \hat{P}^{k}_{h,s,a} - P_{h,s,a} \, ,\, V^*_{h+1} \rangle    \right|  = | R_{h,s,a} +\langle P_{h,s,a} \, ,\, V^*_{h+1} \rangle   | \leq H \leq e^{k}(h,s,a)
	\]
	since we have defined the empirical estimates to satisfy $\hat{R}^{k}_{h,s,a}=0$ and $\hat{P}^{k}_{h,s,a}(\cdot)=0$ in the case that $h,s,a$ has never been played.
\end{proof}

\subsection{Proof of Lemma \ref{lem: reduction to prediction}}
\prediction*
\begin{proof}Recall that $\hist=\{(s^{i}_h, a^{i}_h, r^{i}_h) : h=1,\ldots H, i=1,\ldots, k-1\}$. So conditioned on $\hist$, $\overline{M}^k, \pi^k$ and $\tilde{M}^k$ are random only due to the internal randomness of the RLSVI algorithm. 
	 Set $\E_{k}[\cdot] = \E[\cdot \mid \hist]$. Suppose that $\hat{M}^k \in \Mc^k$. Then
	\begin{equation}\label{eq: rtp 1}
	 \Prob\left(V(\overline{M}^k, \pi^k) \geq V(M, \pi^*)  \bigg\vert \Hc_{k-1}  \right) \geq \Phi(-1).
	\end{equation}
	We begin with the regret decomposition:
	\begin{equation}\label{eq: rtp 2}
	\E_{k}\left[ V(M, \pi^*) - V(M, \pi^k)  \right] = \E_{k}\left[ V(M, \pi^*) - V(\overline{M}^k, \pi^k)  \right]+\E_{k}\left[ V(\overline{M}^k, \pi^k) - V(M, \pi^k) \right].  
	\end{equation}
	We focus on the first term. We show 
		\begin{equation}\label{eq: rtp 3}
 V(M, \pi^*) - \E_{k}\left[V(\overline{M}^k, \pi^k)  \right]\leq c \E_{k}\left[ \left( V(\overline{M}^k, \pi^k) -   \E_{k}\left[V(\overline{M}^k, \pi^k)  \right] \right)^+   \right].  
	\end{equation}
	The inequality is immediate if $V(M, \pi^*) < \E_{k}\left[V(\overline{M}^k, \pi^k)\right]$. We now show this when 
	$a\equiv V(M, \pi^*) - \E_{k}\left[V(\overline{M}^k, \pi^k)\right] \geq 0$. Then,
	\begin{eqnarray*}
	\E_{k}\left[ \left( V(\overline{M}^k, \pi^k) -   \E_{k}\left[V(\overline{M}^k, \pi^k)  \right] \right)^+   \right] &\geq& a \Prob_{k}\left( V(\overline{M}^k, \pi^k) -   \E_{k}\left[V(\overline{M}^k, \pi^k)  \right] \geq a \right) \\
	&=& \left( V(M, \pi^*) - \E_{k}\left[V(\overline{M}^k, \pi^k)\right]  \right) \Prob_{k}\left( V(\overline{M}^k, \pi^k) \geq V(M, \pi^*)  \right)\\
	&\geq & \left( V(M, \pi^*) - \E_{k}\left[V(\overline{M}^k, \pi^k)\right]  \right) \Phi(-1),
	\end{eqnarray*}	
	where the first step applies Markov's inequality, the second simply plugs in for $a$, and the third uses Equation \ref{eq: rtp 1}. Dividing each side by $\Phi(-1)$ gives Equation \eqref{eq: rtp 3}. Hence we have shown 
	\begin{equation}\label{eq: rtp 4}
	\E_{k}\left[ V(M, \pi^*) - V(M, \pi^k)  \right] \leq c \E_{k}\left[ \left( V(\overline{M}^k, \pi^k) -   \E_{k}\left[V(\overline{M}^k, \pi^k)  \right] \right)^+\right]  + \E_{k}\left[ V(\overline{M}^k, \pi^k) - V(M, \pi^k) \right]. 
	\end{equation}
	We complete our argument by bounding $\E_{k}\left[ \left( V(\overline{M}^k, \pi^k) -   \E_{k}\left[V(\overline{M}^k, \pi^k)  \right] \right)^+\right]$. For each fixed (nonrandom) policy $\pi$, define 
	\[ 
	\mu(\pi) \equiv \E_{k}\left[ V( \tilde{M}^k, \pi)\right]=\E_{k}\left[ V( \overline{M}^k, \pi)\right].
	\]
	Notice that $\mu(\pi^k) = \E_{k}\left[ V( \tilde{M}^k, \pi^k) \mid \pi^k \right]$ almost surely. This relies on the fact that $\tilde{M}^k$ and $\pi^k$ are independent conditioned on the history $\hist$. In general $\mu(\pi^k)  \neq  \E_{k}\left[ V( \overline{M}^k, \pi^k) \mid \pi^k\right]$, since $\pi^k$ is the optimal policy under $\overline{M}^k$ and so these two are statistically dependent. Now, for every policy $\pi$
	\begin{equation*}
	\mu(\pi) = \E_{k}\left[ V( \overline{M}^K, \pi)\right] \leq \E_{k}\left[\sup_{\pi'} V( \overline{M}^K, \pi')\right] = \E_{k}\left[ V( \overline{M}^K, \pi^k)\right].
	\end{equation*}
	So, $
	\mu(\pi^k) \leq \E_{k}\left[ V( \overline{M}^K, \pi^k)\right]$ almost surely. Using this, we find 
	\begin{eqnarray*}
	\E_{k}\left[ \left( V(\overline{M}^k, \pi^k) -   \E_{k}\left[V(\overline{M}^k, \pi^k)  \right] \right)^+\right] &\leq& \E_{k}\left[ \left( V(\overline{M}^k, \pi^k) -    \mu(\pi^k)\right)^+\right] \\
	&\leq& \E_{k}\left[ \left| V(\overline{M}^k, \pi^k) -    \mu(\pi^k)\right|\right] \\
	&=& \E_{k}\left[  \left| V(\overline{M}^k, \pi^k) -    \E_{k}\left[ V(\tilde{M}^k, \pi^k) \mid \pi^k, \overline{M}^k \right]    \right|  \,     \right]\\
	&\leq & \E_{k}\left[ \E_{k}\left[  \left| V(\overline{M}^k, \pi^k) -     V(\tilde{M}^k, \pi^k)     \right|   \, \bigg\vert \, \pi^k ,\overline{M}^k \right]      \right]\\
	&=& \E_{k}\left[  \left| V(\overline{M}^k, \pi^k) -     V(\tilde{M}^k, \pi^k)     \right| \right] \\
	&\leq& \E_{k}\left[  \left| V(\overline{M}^k, \pi^k) -     V(M, \pi^k)     \right| \right]+\E_{k}\left[  \left| V(\tilde{M}^k, \pi^k) -     V(M, \pi^k)     \right| \right]. 
	\end{eqnarray*}
	Plugging this into \eqref{eq: rtp 4} shows that, for any history $\hist$ with $\hat{M}^k \in \Mc^k$, 
	\[
	\E_{k}\left[ V(M, \pi^*) - V(M, \pi^k)  \right] \leq (c+1)\E_{k}\left[  \left| V(\overline{M}^k, \pi^k) -     V(M, \pi^k)     \right| \right]+c\E_{k}\left[  \left| V(\tilde{M}^k, \pi^k) -     V(M, \pi^k)     \right| \right].
	\]
	In the unlikely event $\hat{M}^k \in \Mc^k$, we have the worst case bound
	\[
	0\leq V(M, \pi^* ) - V(M, \pi^k) \leq H. 
	\]
	Combing these two cases and taking expectations gives   
	\[
	\E\left[ V(M, \pi^*) - V(M, \pi^k)  \right] \leq H\Prob(\hat{M}^k \notin \Mc^k) +(c+1)\E\left[  \left| V(\overline{M}^k, \pi^k) -     V(M, \pi^k)     \right| \right]+c\E\left[  \left| V(\tilde{M}^k, \pi^k) -     V(M, \pi^k)     \right| \right].
	\]
	Summing over $k$ concludes the proof. 
\end{proof}

\subsection{Proof of Lemma \ref{lem: final regret decomposition}}
\finalRegretDecomposition*
\begin{proof}
	We bound each term in the bound in Lemma \ref{lem: reduction to prediction}. By applying Lemma \ref{lem: value gap to bellman} with a choice of $\overline{M}=M$ and $\tilde{M}=\overline{M}^K$, the largest term is bounded, for any $k\in \mathbb{N}$, as
	\begin{eqnarray*}
		&&\left| V(\overline{M}^k, \pi^k) - V(M, \pi^k)\right| \\ 
		&=& \left| \E\left[ \sum_{h=1}^{H}\left(  \langle \hat{P}^{k}_{h, s^k_h, a^k_h} - P_{h,s^k_h, a^k_h} \, , \, V^{k}_{h+1} \rangle  \right) +\hat{R}^{k}_{h,s_h^k,a_h^k}+ w^{k}(h, s^k_h, a^k_h) - R_{h,s_h^k,a_h^k} \, \bigg\vert \,  \pi^k, \hist \right] \right|   \\
		&\leq&  \E\left[ \sum_{h=1}^{H-1} \left\|    \epsilon_P^{k}(h, s^k_h, a^k_h) \right\|_1 \left\| V^{k}_{h+1}\right\|_{\infty} \, \bigg\vert \,  \pi^k, \hist \right]  + \E\left[  \sum_{h=1}^{H} \left(|\epsilon_R^{k}(h, s^k_h, a^k_h)| +|w^{k}(h, s^k_h, a^k_h)| \right) \, \bigg\vert \,  \pi^k, \hist  \right]
	\end{eqnarray*}
	Taking expectations, summing over $k$, and applying Cauchy-Schwartz gives
	\begin{eqnarray*}
		\E\left[ \sum_{k=1}^{K} \left| V(\overline{M}^k, \pi^k) - V(M, \pi^k)\right|  \right] &\leq& \sqrt{\E \sum_{k=1}^{K} \sum_{h=1}^{H-1}  \left\|   \epsilon_P^{k}(h, s^k_h, a^k_h)\right\|_1^2  } \sqrt{\E \sum_{k=1}^{K} \sum_{h=1}^{H-1} \left\| V^{k}_{h+1}\right\|_{\infty}^2 } \\
		&&+ \E\left[\sum_{k=1}^{K} \sum_{h=1}^{H} |\epsilon_R^{k}(h, s^k_h, a^k_h)| \right]+ \E\left[\sum_{k=1}^{K} \sum_{h=1}^{H} |w^{k}(h, s^k_h, a^k_h)|\right].
	\end{eqnarray*}
\end{proof}

\subsection{Proof of Lemma \ref{lem: value function norm bound}}
The proof relies on the following maximal inequality.
\begin{lemma}[Example 2.7 From \cite{boucheron2013concentration}]\label{lem: maximal inequality}
	If $X_1,\ldots, X_n$ are i.i.d. random variables following a $\chi^2_1$ distribution, then 
	\[ 
	\E\left[ \max_{i\leq n} X_i \right] \leq 1+ \sqrt{2\log(n)} + 2\log(n).	
	\]
\end{lemma}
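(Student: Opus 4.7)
The plan is to follow the classical sub-gamma concentration route of Boucheron--Lugosi--Massart. First I would center the variables by writing $Y_i = X_i - 1$, so that $\E[Y_i]=0$ and $\E[\max_i X_i] = 1 + \E[\max_i Y_i]$. It then suffices to show that $\E[\max_i Y_i] \leq \sqrt{2\log n} + 2\log n$.

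Next I would compute the cumulant generating function of $Y_i$ on its natural domain. Since $X_i = Z_i^2$ for $Z_i \sim N(0,1)$, the standard chi-squared MGF gives $\E\bigl[e^{\lambda X_i}\bigr] = (1-2\lambda)^{-1/2}$ for every $\lambda\in[0, 1/2)$, and hence
\[
\psi(\lambda) := \log \E\bigl[e^{\lambda Y_i}\bigr] = -\tfrac{1}{2}\log(1-2\lambda) - \lambda, \qquad \lambda \in [0, 1/2).
\]
Using the elementary inequality $-\log(1-u) - u \leq \frac{u^2/2}{1-u}$ on $[0,1)$ with $u=2\lambda$, one gets $\psi(\lambda) \leq \frac{\lambda^2}{1-2\lambda}$, which exhibits $Y_i$ as a sub-gamma random variable (right tail) with variance factor $v=2$ and scale parameter $c=2$ in the Boucheron--Lugosi--Massart convention.

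Now I would invoke the standard exponential maximal argument: for any $\lambda \in [0, 1/2)$, Jensen's inequality together with a union bound gives
\[
\exp\bigl(\lambda\, \E[\max_i Y_i]\bigr) \leq \E\bigl[\exp(\lambda \max_i Y_i)\bigr] \leq \sum_{i=1}^{n} \E\bigl[e^{\lambda Y_i}\bigr] \leq n\, e^{\psi(\lambda)},
\]
so that
\[
\E[\max_i Y_i] \leq \inf_{\lambda \in [0,1/2)} \frac{\log n + \psi(\lambda)}{\lambda} \leq \inf_{\lambda \in [0,1/2)} \left\{ \frac{\log n}{\lambda} + \frac{\lambda}{1-2\lambda}\right\}.
\]
The final step is to plug in a near-optimal $\lambda$. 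Minimizing the right-hand side over $\lambda$ (set the derivative to zero and solve a quadratic; the optimizer takes the form $\lambda_\star = \sqrt{\log n}/(1 + 2\sqrt{\log n})$ up to the constants coming from $v=2$) reduces the bound to the target expression $\sqrt{2\log n} + 2\log n$, after which adding back the centering constant $1$ yields the claim.

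The main obstacle is purely the algebraic optimization at the last step: one needs to choose $\lambda$ to balance the $\log n / \lambda$ term (which contributes the $2\log n$ piece as $\lambda \to 1/(2)$-ish) against the $\lambda/(1-2\lambda)$ term (which contributes the $\sqrt{2\log n}$ piece), and to verify that the constants line up with the stated bound rather than looser ones like $2\sqrt{\log n} + 2\log n$ that a crude choice would produce. Everything else---centering, the MGF formula, the sub-gamma inequality, and the Chernoff--union step---is standard and routine.
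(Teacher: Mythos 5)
Your route is the right one---the paper offers no proof of this lemma at all (it is quoted as Example 2.7 of Boucheron--Lugosi--Massart), and the argument behind that example is exactly the sub-gamma Chernoff/Jensen computation you describe: center, bound $\psi(\lambda)\le \lambda^2/(1-2\lambda)$ via $-\log(1-u)-u\le u^2/(2(1-u))$, and optimize $(\log n)/\lambda+\psi(\lambda)/\lambda$ over $\lambda\in(0,1/2)$. All of those intermediate steps in your write-up are correct.

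The gap is in the final step, and it is not merely an algebraic verification you deferred: the optimization cannot produce the stated constant. Writing $t=\log n$, the function $f(\lambda)=t/\lambda+\lambda/(1-2\lambda)$ is minimized at $\lambda_\star=\sqrt{t}/(1+2\sqrt{t})$, where $\lambda_\star/(1-2\lambda_\star)=\sqrt{t}$ and $t/\lambda_\star=\sqrt{t}+2t$, so $\inf_{\lambda\in(0,1/2)}f(\lambda)=2\sqrt{t}+2t$ exactly. Hence your argument proves $\E\left[\max_{i\le n}X_i\right]\le 1+2\sqrt{\log n}+2\log n$, and since $2\sqrt{\log n}=\sqrt{4\log n}>\sqrt{2\log n}$, no choice of $\lambda$ closes the gap to the stated $1+\sqrt{2\log n}+2\log n$; your parenthetical hope that a careful choice of $\lambda$ avoids the ``looser'' constant is exactly backwards, as $2\sqrt{\log n}+2\log n$ \emph{is} the optimum of this relaxation. (This is consistent with the source: the bound in Boucheron--Lugosi--Massart reads $D+2\sqrt{D\log n}+2\log n$ for $\chi^2_D$, so the lemma as printed here appears to have transposed $2\sqrt{\log n}$ into $\sqrt{2\log n}$.) The harmless fix is to restate the lemma with $2\sqrt{\log n}$, which is all that is ever used since the lemma enters only through a $\tilde O(\log(\cdot))$ factor in Lemma \ref{lem: value function norm bound}. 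If you insist on the sharper printed form, the sub-gamma route will not get there; you would need a different device, e.g.\ writing $\max_i X_i=(\max_i|Z_i|)^2$ and combining the Gaussian maximal inequality $\E[\max_i|Z_i|]\le\sqrt{2\log(2n)}$ with the Gaussian Poincar\'e bound $\mathrm{Var}(\max_i|Z_i|)\le 1$, and even that requires checking small $n$ separately.
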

Let us now recall Lemma \ref{lem: value function norm bound}.
\vfuncNorm*
	\begin{proof}
		We have
		\[ 
		\sqrt{\E \sum_{k=1}^{K} \sum_{h=1}^{H-1} \left\| V^{k}_{h+1}\right\|_{\infty}^2 }  \leq \sqrt{HK  \E\left[ \max_{k\leq K,h\leq H} \| V^k_{h+1}\|_{\infty}^2 \right]}
		\]
		Now
		\begin{eqnarray*}
			V^k_{h+1}(s')\leq (H-h-1)(1+\max_{h,s,a}  w^{k}(h,s,a))\\
			V^k_{h+1}(s')\geq (H-h-1)(\min_{h,s,a}  w^{k}(h,s,a)).
		\end{eqnarray*}
		Together this gives that for all $k\leq K$ and $h\in \{1,\ldots, H-1\}$
		\[ 
		\| V^k_{h+1}\|_{\infty} \leq  H\left(1+\max_{k\leq K, h,s,a} | w^{k}(h,s,a)| \right)^2 \leq 4H^2+4H^2\left( \max_{k\leq K, h,s,a} | w^{k}(h,s,a)|^2 \right).
		\]
		We have $w^k(h,s,a) = \sigma_{k}(h,s,a) \xi^k_{h,s,a}$ where the $\xi^k_{h,s,a}\sim N(0,1)$ are drawn i.i.d across $h,s,a$. Set $X^k_{h,s,a}=(\xi^k_{h,s,a})^2$, each of which follows a chi-squared distribution with 1 degree of freedom. Then, 
		\begin{eqnarray*}
			\E\left[\max_{k\leq K, h,s,a}  |w^{k}(h,s,a)|^2 \right] &\leq& \left(\max_{k\leq K, h,s,a,}\sigma^2_{k}(h,s,a)\right) \E\left[ \max_{k\leq K, h,s,a} |\xi^k_{h,s,a}|^2 \right]\\ 
			&=& \left(\max_{k\leq K, h,s,a,}\sigma^2_{k}(h,s,a)\right) \E\left[ \max_{k\leq K, h,s,a} X^k_{h,s,a} \right] \\
			&\leq&  \left( SH^3 \log(2SAHK)  \right)  \E\left[ \max_{k\leq K, h,s,a} X^k_{h,s,a} \right] \\
			&\leq & \left( SH^3 \log(2SAHK)  \right)\left( 1+\sqrt{2\log(SAHK)} +2\log(SAHK)\right)  \\
			&\leq & O\left(SH^{3}\log\left(2SAHK\right)^2 \right). 
		\end{eqnarray*}
		This gives us 
		\[
		\sqrt{KH\E\left[ \max_{k\leq K,h\leq H} \| V^k_{h+1}\|_{\infty}^2 \right]} = \tilde{O}\left(\sqrt{KH\cdot H^2 \cdot S H^{3}}\right) =\tilde{O}\left( H^3 \sqrt{SK} \right). 
		\]
	\end{proof}

\subsection{Proof sketch of Lemma \ref{lem: perrors}}
This result relies on an inequality by \citet{weissman2003inequalities}, which we now restate.  
\begin{lemma}\label{lem: weissman} [L1 deviation bound]
	If $p$ is a probability distribution over $\Sc=\{1,\ldots S\}$ and $\hat{p}$ is the empirical distribution constructed from $n$ i.i.d draws from $p$, then for any $\epsilon>0$, 
	\[ 
	\Prob\left( \left\|  \hat{p} - p \right\|_1 \geq \epsilon \right)  \leq (2^{S} -2) \exp\left( -\frac{n\epsilon^2}{2} \right)  
	\]
\end{lemma}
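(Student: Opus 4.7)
The plan is to reduce the $L_1$ deviation to a union bound over subsets of $\Sc$, combined with a one-sided Hoeffding inequality applied separately to each subset. The key preliminary observation I would use is the standard identity relating $L_1$ distance to total variation: for any two probability distributions $p$ and $\hat p$ on $\Sc$,
\[
\|\hat p - p\|_1 \;=\; 2 \max_{A \subseteq \Sc} \bigl(\hat p(A) - p(A)\bigr).
\]
To establish this, I would set $A^\star = \{s : \hat p(s) \geq p(s)\}$ and observe that, because $\sum_{s}(\hat p(s) - p(s)) = 0$, the positive and negative parts of the signed measure $\hat p - p$ each contribute exactly $\|\hat p - p\|_1 / 2$ to the sum of absolute differences; the maximum over $A$ is then attained at $A^\star$ (and also at its complement).

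Given this identity, the event $\{\|\hat p - p\|_1 \geq \epsilon\}$ coincides with $\bigcup_{A \subseteq \Sc}\{\hat p(A) - p(A) \geq \epsilon/2\}$. For each fixed $A$, the indicators $Y_i = \ind\{X_i \in A\}$, where $X_1,\dots,X_n$ are the i.i.d.\ draws from $p$, are i.i.d.\ Bernoulli$(p(A))$ random variables taking values in $[0,1]$, with mean $p(A)$ and $\hat p(A) = (1/n)\sum_{i=1}^{n} Y_i$. The one-sided Hoeffding inequality for $[0,1]$-valued variables then gives $\Prob\bigl(\hat p(A) - p(A) \geq t\bigr) \leq \exp(-2nt^2)$, and setting $t = \epsilon/2$ yields the per-subset tail bound $\exp(-n\epsilon^2/2)$.

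The final step is a union bound. The two trivial subsets $A = \emptyset$ and $A = \Sc$ satisfy $\hat p(A) - p(A) \equiv 0$ deterministically and so can be removed from the union, leaving exactly $2^S - 2$ non-trivial subsets. Summing the per-subset bounds over these gives
\[
\Prob\bigl(\|\hat p - p\|_1 \geq \epsilon\bigr) \;\leq\; (2^S - 2)\,\exp\bigl(-n\epsilon^2/2\bigr),
\]
as claimed. I do not foresee a serious obstacle; the two details worth being careful about are to invoke the \emph{one-sided} Hoeffding bound (the two-sided version would weaken the prefactor by a factor of $2$) and to correctly excise the two trivial subsets so as to recover exactly the sharp prefactor $2^S - 2$ rather than a crude $2^S$.
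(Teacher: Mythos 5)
Your proof is correct. The paper does not prove this lemma at all---it is imported as a black-box citation to Weissman et al.~(2003)---so there is no in-paper argument to compare against; what you have written is the standard self-contained derivation of the stated form of the bound. Each step checks out: the identity $\|\hat p - p\|_1 = 2\max_{A\subseteq \Sc}(\hat p(A)-p(A))$ via the set $A^\star=\{s:\hat p(s)\geq p(s)\}$ and the zero-sum property of $\hat p - p$; the one-sided Hoeffding bound $\exp(-2nt^2)$ at $t=\epsilon/2$ giving $\exp(-n\epsilon^2/2)$ per subset; and the removal of $\emptyset$ and $\Sc$ (which is legitimate since on the event $\|\hat p-p\|_1\geq\epsilon>0$ the maximizing set is necessarily nontrivial) to obtain the prefactor $2^S-2$. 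It is worth noting only that the result actually proved in Weissman et al.\ is slightly sharper, with exponent $-n\varphi(\pi_p)\epsilon^2/4$ where $\varphi(\pi)\geq 2$; your Hoeffding argument recovers exactly the weaker form quoted in the paper, which is all that is used here.
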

\perrors*
\begin{proof}[Proof sketch]
	By picking an appropriate $\epsilon$ in Lemma \ref{lem: weissman} as in \cite[Appendix C.1]{jaksch2010near}, together with a union bound over all $HSA$ possible values for the tuple $(h,s,a)$, there exists a numerical constant $c$ such that 
	\begin{equation}\label{eq: p union bound}
	\Prob\left( \bigcup_{s, a, h, k\leq K} \left\{  \left\|  \hat{P}^k_{h,s,a}-P_{h,s,a}   \right\|_{1}  \geq c\sqrt{\frac{S\log(1+HSAK)}{n_{k}(h,s,a)}+1} \right\} \right) \leq \frac{1}{K H}. 
	\end{equation}
	Set $\beta_{k}(h,s,a)= \frac{S\ell}{n_{k}(h,s,a)}$ where $\ell=c^2\log(1+HSAK)$ denotes a logarithmic factor. Recall the definition $\epsilon^p_{k}(h,s,a) \equiv  \hat{P}^k_{h,s,a}-P_{h,s,a}$. Let $B$ be the ``bad event'' that $\|\epsilon^p_{k}(h,s,a)\|_1^2 \geq  \beta_{k}(h,s,a)$ for some $(h,s,a)$ and $k\leq K$. Since $\| \epsilon^p_{k}(h,s,a) \|_1\leq 2$ always, we have 
	\begin{equation}\label{eq: p contribution of bad event}
	\E \sum_{k=1}^{K}\sum_{h=1}^{H-1} \| \epsilon^k_P(h, s^k_h, a^k_h)  \|_1^2 \mathbf{1}(B) \leq 4
	\end{equation}
	On the other hand, assuming $B^{c}$ we have the bound 
	\begin{align*}
	\sum_{k=1}^{K}\sum_{h=1}^{H-1}  \| \epsilon^k_P(h, s^k_h, a^k_h)  \|_1^2 \leq  \sum_{k=1}^{K} \sum_{h=1}^{H-1} \beta_{k}(h,s,a) &= S\ell  \sum_{k=1}^{K} \sum_{h=1}^{H-1} \frac{1}{n_{k}(h, s_h, a_h)+1} \\
	&\leq \sum_{h,s,a} \sum_{n=0}^{n_{K}(h,s,a)} \frac{1}{n+1}\\
	&= O\left(HSA\log(K)\right).  
	\end{align*}	
\end{proof}

\subsection{Proof sketch of Lemma \ref{lem: rerrors}}
\rerrors*
\begin{proof}[Proof sketch]
		The proof is similar to Lemma \ref{lem: perrors}. By Hoeffding's inequality together with a union bound, we can ensure that $|\epsilon_R^k(h,s,a)|\leq c\sqrt{ \frac{\log(1+HSAK)}{n_{k}(h,s,a)+1}}$ for all $k\leq K$ and all tuples $(h,s,a)$ except on some bad event that, as in \eqref{eq: p contribution of bad event}, contributes at most a constant to the bound. Now the result follows from using the pigeonhole principle to conclude
		\[
		\sum_{k=1}^{K} \sum_{h=1}^{H} \frac{1}{\sqrt{n_{k}(h,s,a)}} = O \left( \sqrt{HSAK} \right). 
		\]
		This kind of bound bound is standard in the RL and bandit literature. See \cite[Appendix A]{osband2013more} for one proof. 
\end{proof}

\subsection{Proof sketch of Lemma \ref{lem: werrors}}
\werrors*
\begin{proof}
Recall $\sigma_{k}(h,s,a) = \sqrt{\frac{\beta}{n_{k}(h,s,a)+1}}$ where $\beta_k=\tilde{O}(SH^3)$. Write $w_k(h,s,a) = \sigma_{k}(h,s,a) \xi_k({h,s,a})$ where $\xi_k({h,s,a}\sim N(0,1)$ and the array of random variable $\{  \xi_k({h,s,a}) : 1\leq k\leq K, 1\leq h\leq H, a\in \Ac, s\in \Sc \}$ is drawn independently. By Holder's inequality,  
\[  
\E\sum_{k=1}^{K} \sum_{h=1}^{H} |w_k(h,s_h,a_h)| \leq \E \left(\max_{k\leq K, h,s,a} |\xi_k({h,s,a}| \right)  \E \sum_{k=1}^{K} \sum_{h=1}^{H} \sigma_{k}(h,s_h, a_h)
\]
The (sub) Gaussian maximal inequality gives 
\[
\E \left(\max_{k\leq K, h,s,a} |\xi_k({h,s,a}| \right) = O\left(\sqrt{\log(HSAK)} \right).
\] 
To simplify the next expression, note that $\beta_k \leq \beta_K$. On any sample path, by the same argument as in Lemma \ref{lem: rerrors}, we have 
\[ 
\sum_{k=1}^{K} \sum_{h=1}^{H} \sigma_{k}(h,s_h, a_h) \leq \beta_K\sum_{k=1}^{K} \sum_{h=1}^{H} \sqrt{ \frac{1}{n_{k}(h,s_h,a_h)+1}} = O\left( \beta_K \sqrt{HSAK} \right).
\]
\end{proof}
\end{document}